\let\csname equation*\endcsname\relax
\let\csname endequation*\endcsname\relax
\def\E{\mathbb{E}}
\theoremstyle{thmstyleone}%
\newtheorem{theorem}{Theorem}[section]
\theoremstyle{thmstyletwo}%
\newtheorem{assumption}[theorem]{Assumption}
\newtheorem{proposition}[theorem]{Proposition}
\theoremstyle{thmstylethree}%
\newtheorem{definition}[theorem]{Definition}%
\def\E{\mathrm{E}}
\begin{document}

\title[Wasserstein Uncertain Model Reconstruction]{A local squared Wasserstein-2 method for efficient reconstruction of models with uncertainty}

\author*[1]{\fnm{Mingtao} \sur{Xia}}\email{xiamingtao@nyu.edu}

\author[2]{\fnm{Qijing} \sur{Shen}}\email{qijing.shen@ndm.ox.ac.uk}

\affil[1]{\orgdiv{Courant Institute of Mathematical Sciences}, \orgname{New York University}, \orgaddress{\street{251 Mercer Street}, \city{New York}, \postcode{10012}, \state{NY}, \country{USA}}}

\affil[2]{\orgdiv{Nuffield Department of Medicine}, \orgname{University of Oxford}, \orgaddress{ \city{Oxford}, \postcode{OX3 7BN}, \country{UK}}}


\abstract{In this paper, we propose a local squared Wasserstein-2 ($W_2$) method to solve the inverse problem of reconstructing models with uncertain latent variables or parameters. A key advantage of our approach is that it does not require prior information on the distribution of the latent variables or parameters in the underlying models. Instead, our method can efficiently reconstruct the distributions of the output associated with different inputs based on empirical distributions of observation data.
We demonstrate the effectiveness of our proposed method across several uncertainty quantification (UQ) tasks, including linear regression with coefficient uncertainty, training neural networks with weight uncertainty, and reconstructing ordinary differential equations (ODEs) with a latent random variable.}

\keywords{Uncertainty quantification, Wasserstein distance, Inverse problem, Linear regression, Artificial neural network}



\maketitle

\section{Introduction}
Models incorporating uncertainty have been extensively utilized across various fields. 
For example, models incorporating measurement errors are widely used \cite{fuller2009measurement, carroll1995measurement, schennach2004estimation}. Additionally, models involving latent unobserved variables are frequently employed 
in uncertainty quantification (UQ) \cite{bishop1998latent}, with applications in stock price modeling \cite{yang2020parameter} and image processing \cite{aliakbarpour2016heterogeneous}.In bioinformatics, when analyzing people's propensity to get infected by certain types of genotype-influenced diseases, dimension reduction techniques are often employed to eliminate genes with minor relevance to the disease \cite{ansari2017genome}.
In these models, instead of offering a single deterministic output, the output is sampled from a distribution influenced by the input.

The reconstruction of models with uncertainty from data has received significant research interest \cite{cheng2023machine,zhang2012bayesian}.
Traditional methods for reconstructing models with uncertainty primarily focus on parameter inferences.
These approaches typically start by assuming a specific model form with several unknown parameters and then aim to infer the mean and variance of these parameters from the data \cite{kuczera1998monte, arcieri2023bridging}.
 Recent advancements in Bayesian methods, especially the Bayesian neural network (BNN) \cite{goan2020bayesian, shang2023approximating}, make it possible to learn the posterior distribution of unknown and uncertain model parameters given their prior distributions as well as observed data. 

The Wasserstein distance, which measures the discrepancy between two probability distributions \cite{villani2009optimal, zheng2020nonparametric}, has recently become a popular research topic in UQ \cite{kidger2021neural}. For example, regularized Wasserstein distance methods have been proposed for multi-label prediction problems \cite{frogner2015learning} and imaging applications \cite{adler2017learning}. Additionally, the Wasserstein generative adversarial network (WGAN) \cite{gao2022wasserstein} has been applied to various tasks, such as image generation \cite{jin2019image, wang2023improved} and generating the distribution of solutions to partial differential equations with latent parameters \cite{gao2022wasserstein}. However, training a generative adversarial network model can be challenging and computationally expensive \cite{saxena2021generative}.

In this work, we study the following model with uncertainty:
\begin{equation}
    \bm{y}(\bm{x};\omega) \coloneqq \bm{f}(\bm{x},\omega),\,\, \bm{x}\in D\subseteq\mathbb{R}^n
    \label{model_object}
\end{equation}
where $\bm{f}(\cdot;\cdot):\mathbb{R}^n\times \Omega\rightarrow\mathbb{R}^d$ is a continuous function in $\bm{x}$; 
$\omega\in\Omega$ is a latent random variable in a sample space $\Omega$. Only $\bm{x}$ is observed (referred to as the input). Therefore, $\bm{y}(\bm{x};\omega)$ follows a distribution determined by $\bm{x}$. 
Our goal is to reconstruct a model:
\begin{equation}
    \hat{\bm{y}}(\bm{x}; \hat{\omega}) \coloneqq \hat{\bm{f}}(\bm{x},\hat{\omega}), \,\, \bm{x}\in D\subseteq\mathbb{R}^n
    \label{approximate}
\end{equation}
as an approximation to Eq.~\eqref{model_object} in the sense that the distribution of $\bm{y}(\bm{x};\omega)$ can be matched by the distribution of $\hat{\bm{y}}(\bm{x};\hat{\omega})$ for the same input $\bm{x}$. In Eq.~\eqref{approximate}, $\hat{\omega}\in\hat{\Omega}$ is another random variable in another sample space $\hat{\Omega}$ (we do not require $\hat{\Omega}$ to be the same as $\Omega$). To our knowledge, there exist few methods that directly reconstruct the distribution of $\bm{y}(\bm{x}; \omega)$ for different $\bm{x}$ in Eq.~\eqref{model_object} without requiring a specific form of $\bm{f}$ or a prior distribution of $\omega$.

In our paper, we propose and analyze a novel local squared Wasserstein-2 ($W_2$)
method to reconstruct a model Eq.~\eqref{approximate} for approximating the uncertainty model Eq.~\eqref{model_object}. 
Our main contributions are: i) we propose and analyze a local squared $W_2$ loss function for reconstructing uncertainty models in UQ, which could be efficiently evaluated by empirical distributions from a finite number of observations
, ii) unlike the Bayesian methods or previous Wasserstein-distance-based methods \cite{yang58imprecise}, our method does not require a prior distribution of $\omega$ nor does it necessarily require an explicit form of $\bm{f}$ in Eq.~\eqref{model_object}, and iii) through numerical experiments, we showcase the efficacy of our proposed method in different UQ tasks such as linear regression with coefficient uncertainty, training a neural network with weight uncertainty, and reconstructing an ODE with latent uncertain parameters. 

\section{Results}
\label{numerical_result}
\subsection{Local squared $W_2$ loss function}
\label{local_definition}
We present a novel local squared $W_2$ loss function:
\begin{equation}
      \tilde{W}_{2, \delta}^{2, \text{e}}(\bm{y}, \hat{\bm{y}})\coloneqq\int_D W_2^2(\mu_{\bm{x}, \delta}^{\text{e}}, \hat{\mu}_{\bm{x}, \delta}^{\text{e}})\nu^{\text{e}}(\text{d}\bm{x}) 
    \label{localw2}
\end{equation}
which approximates the quantity
\begin{equation}
\tilde{W}_{2}^{2}(\bm{y}, \hat{\bm{y}})\coloneqq\int_D W_2^2(\mu_{\bm{x}}, \hat{\mu}_{\bm{x}})\nu(\text{d}\bm{x}).
\label{approx_quantity}
\end{equation}
In Eqs.~\eqref{localw2} and \eqref{approx_quantity}, $\nu(\cdot)$ and $\nu^{\text{e}}(\cdot)$ are the distribution and the empirical distribution of $\bm{x}$, respectively. $\bm{y}, \hat{\bm{y}}$ correspond to the LHS of ground truth model Eq.~\eqref{model_object} and the LHS of the approximate model Eq.~\eqref{approximate}, respectively. $W_2^2$ is the squared $W_2$ distance (detailed definition given in Definition~\ref{def:W2}).
In Eqs.~\eqref{localw2} and ~\eqref{approx_quantity}, $\mu_{\bm{x}}$ is the distribution of $\bm{y}(\bm{x}; \omega)$ when $\bm{x}$ is fixed, and $\mu_{\bm{x}, \delta}^{\text{e}}$ is the empirical conditional distribution of $\bm{y}(\tilde{\bm{x}};\omega)$ conditioned on  
$|\tilde{\bm{x}} - \bm{x}|_{x}\leq \delta$. Similarly, $\hat{\mu}_{\bm{x}}$ is the distribution of $\hat{\bm{y}}(\bm{x}, \hat{\omega})$ when $\bm{x}$ is fixed, and $\hat{\mu}_{\bm{x}, \delta}^{\text{e}}$ is the empirical conditional distribution of $\hat{\bm{y}}(\tilde{\bm{x}};\hat{\omega})$ conditioned on $|\tilde{\bm{x}} - \bm{x}|_{x}\leq \delta$, respectively. $|\cdot|_{x}$ denotes a norm for $\bm{x}\in\mathbb{R}^d$. 

Our local squared $W_2$ method approximates Eq.~\eqref{model_object} using  Eq.~\eqref{approximate} by minimizing the local squared $W_2$ loss function $\tilde{W}_{2, \delta}^{2, \text{e}}(\bm{y}, \hat{\bm{y}})$ in Eq.~\eqref{localw2}. 
Analysis on why minimizing $\tilde{W}_{2, \delta}^{2, \text{e}}(\bm{y}, \hat{\bm{y}})$, as an approximation to Eq.~\eqref{approx_quantity}, leads to the successful reconstruction of Eq.~\eqref{model_object} is in Subsection~\ref{W2_loss}.
We shall test the effectiveness of our local squared $W_2$ method across several different UQ tasks. In this paper, $\|\cdot\|$ refers to the $l^2$ norm of a vector and the errors in the mean $\E[\hat{y}]$ and the standard deviation $\text{SD}[\hat{y}]$ stand for the relative errors:
\begin{equation}
    \text{error in~}\E[\hat{y}]\coloneqq\frac{\int_D\Big|\E[y(\bm{x};\omega)] - \E[\hat{y}(\bm{x};\hat{\omega})]\Big|\nu^{\text{e}}(\text{d}\bm{x})}{\int_D|\E[y(x_i;\omega)]]|\nu^{\text{e}}(\text{d}\bm{x})},\,\,\,\,
    \text{error in SD~}[\hat{y}]\coloneqq\frac{\int_D\Big|\text{SD}[y(x_i;\omega)] - \text{SD}[\hat{y}(x_i;\hat{\omega})]\Big|\nu^{\text{e}}(\text{d}\bm{x})}{\int_D|\text{SD}[y(x_i;\omega)]|\nu^{\text{e}}(\text{d}\bm{x})}.
\end{equation}

\subsection{Linear regression with coefficient uncertainty}
\label{example2}
We first apply our proposed local squared $W_2$ method to a linear regression problem with coefficient uncertainty. 
We consider the following linear model whose coefficients are sampled from the normal distribution
\cite{raftery1993model}:
    \begin{eqnarray}
        y(\bm{x}; \omega) = \sum_{i=1}^3\omega_i x_i + \omega_0,\,\, \omega_i\sim\mathcal{N}(b_i, \sigma_i^2).
        \label{example2_model}
    \end{eqnarray}
    We assume that $\bm{x}$ is independent of $\omega$ and $\omega_i$ is independent of $\omega_j$ when $i\neq j$. In Eq.~\eqref{example2_model}, we set the ground truth:
    \begin{equation}
        (b_0, b_1, b_2, b_3)=(1,1,2,3), \,\,\,\, (\sigma_0, \sigma_1, \sigma_2, \sigma_3)=(0.1, 0.2, 0.3, 0.4).
        \label{w_ground_truth}
    \end{equation}
     We aim to develop another linear model:
\begin{equation}
    \hat{y}(\bm{x};\hat{\omega}) = \sum_{i=1}^3\hat{\omega}_i x_i + \hat{\omega}_0, \,\,\,\, \hat{\omega}_i\sim \mathcal{N}(\hat{b}_i, \hat{\sigma}_i^2), \,\,\hat{\bm{b}}\coloneqq(\hat{b}_1,\hat{b}_1,\hat{b}_1,\hat{b}_1),\,\,\hat{\bm{\sigma}} \coloneqq (\hat{\sigma}_1,\hat{\sigma}_2, \hat{\sigma}_3, \hat{\sigma}_4)
        \label{example2_approx}
\end{equation}
to approximate Eq.~\eqref{example2_model} so that the distribution of $y(\bm{x};\omega)$ can be matched by the distribution of $\hat{y}(\bm{x};\hat{\omega})$ when fixing $\bm{x}$.
In Eq.~\eqref{example2_approx}, we assume that $\bm{x}$ is independent of $\hat{\omega}$ and $\hat{\omega}_i$ is independent of $\hat{\omega}_j$ when $i\neq j$. 
For the training data $\{(\bm{x}_i, y_i)\}_{i=1}^N$, we let $x_1, x_2, x_3$ be independent of each other and sample $\bm{x}\coloneqq(x_1,x_2, x_3)$ from the following distributions:
    \begin{eqnarray}
        x_1\sim \text{Exp}(4), \,\, x_2\sim\mathcal{N}(0, 0.25),\,\, x_3\sim \text{Be}(5, 5).
    \end{eqnarray}
$\text{Exp}(4)$ denotes the exponential distribution with intensity parameter $4$, while $\text{Be}(5, 5)$ represents the Beta distribution with both its shape and scale parameters set to $5$.

We minimize the local squared $W_2$ distance Eq.~\eqref{localw2} in order to obtain $\hat{b}_i$ and $\hat{\sigma}_i$ in Eq.~\eqref{example2_approx}.
When determining the neighborhood $|\tilde{\bm{x}}-\bm{x}|\leq\delta$ of $\bm{x}$ for evaluating the empirical distributions $\mu_{\bm{x}, \delta}^{\text{e}}$ and $\hat{\mu}_{\bm{x}, \delta}^{\text{e}}$ in Eq.~\eqref{localw2}, two different norms of $\bm{x}$ are used:
\begin{equation}
|\bm{x}|_{\text{homo}}^2\coloneqq\sum_{i=1}^3 x_i^2,\,\,|\bm{x}|^2_{\text{hete}}\coloneqq\sum_{i=1}^n c_i^2x_i^2,
\label{norm_def}
\end{equation}
where $c_i$ in $|\bm{x}|^2_{\text{hete}}$ are obtained from carrying out a linear regression of $y$ w.r.t. $\bm{x}$ by minimizing:
\begin{equation}
\sum_{i=1}^N \Big(y_i - \sum_{i=1}^3 c_ix_i - c_0\Big)^2.
\end{equation}
Using $|\cdot|_{\text{heto}}$ accounts for the heterogeneity in the dependencies of $y$ on $x_1, x_2, x_3$ in Eq.~\eqref{example2_model}. We use the average relative errors to measure errors in the reconstructed $\hat{b}_i$ and $\hat{\sigma}_i, i=1,2,3,4$ in Eq.~\eqref{example2_approx}:
\begin{equation}
    \text{Error~in~} \hat{\bm{b}}\coloneqq \frac{|b_i-\hat{b}_i|}{\sum_{i=0}^4|b_i|},\,\,\,\,\,\,\,\, \text{Error~in~ }\hat{\bm{\sigma}}\coloneqq\frac{\big||\sigma_i|-|\hat{\sigma}_i|\big|}{\sum_{i=0}^4|\sigma_i|}.
    \label{error_metric}
\end{equation}

    \begin{figure}
    \centering
\includegraphics[width=\linewidth]{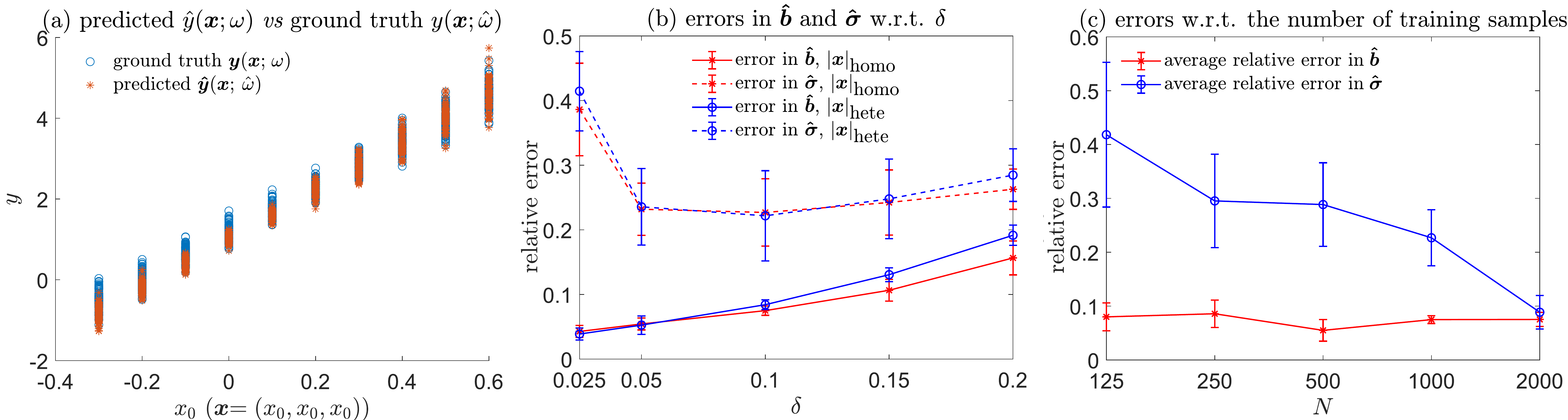}
    \caption{(a) The predicted $\hat{\bm{y}}(\bm{x};\omega)$ versus the ground truth $\bm{y}(\bm{x}, \hat{\omega})$. To illustrate, we take $\bm{x}$ on the line $\bm{x}=(x_0, x_0, x_0)$ and choose different values of $x_0=-0.3+0.1i, i=0,\ldots,9$. At each $\bm{x}$,
    we independently sample 100 $\omega=(\omega_1, \omega_2, \omega_3, \omega_4)$ in Eq.~\eqref{example2_model} as well as $\hat{\omega}=(\hat{\omega}_1, \hat{\omega}_2, \hat{\omega}_3, \hat{\omega}_4)$ in Eq.~\eqref{example2_approx} and plot 100 ground truth $y(\bm{x};\omega)$ versus 100 predicted $\hat{y}(\bm{x};\hat{\omega})$.
    (b) The average relative errors in $\hat{b}_i$ and $\hat{\sigma}_i$ w.r.t. the size of neighborhood $\delta$ when using the two different norms of the input $\bm{x}$: $|\bm{x}|_{\text{homo}}$ and $|\bm{x}|_{\text{hete}}$. (c) The average relative errors in $\hat{b}_i$ and $\hat{\sigma}_i$ w.r.t. the number of training samples $N$. In (c), the norm for $\bm{x}$ is $|\bm{x}|_{\text{hete}}$ (defined in Eq.~\eqref{norm_def}) and the size of neighborhood$\delta=0.1$ . }
    \label{fig:example2}
\end{figure}

In Fig.~\ref{fig:example2} (a), the distribution of the predicted $\hat{\bm{y}}(\bm{x};\hat{\omega})$ matches well with the distribution of the ground truth $\bm{y}(\bm{x};\omega)$ on the line $\bm{x} = (x_0, x_0, x_0)$. In Fig.~\ref{fig:example2} (b), the errors in the reconstructed $\hat{\bm{b}}$ and $\hat{\bm{\sigma}}$ are not sensitive to whether using $|\bm{x}|_{\text{homo}}$ or $|\bm{x}|_{\text{hete}}$. However, when the size of neighborhood $\delta$ in Eq.~\ref{localw2} is too small ($\delta=0.025$), the error in the reconstructed standard deviation $\hat{\bm{\sigma}}$ is large. When $\delta$ is too small, the local squared $W_2$ loss Eq.~\eqref{localw2} might not be a good approximation of $\tilde{W}_{2}^{2}(\bm{y}, \hat{\bm{y}})$ in Eq.~\eqref{localw2}, leading to the poor reconstruction of Eq.~\eqref{example2_model}. On the other hand,
the error in the reconstructed mean $\hat{\bm{b}}$ gets larger when $\delta$ increases. Errors in the reconstructed mean and standard deviation are both maintained small when $\delta\in[0.05, 0.1]$. 
 The error in the reconstructed standard deviation $\hat{\bm{\sigma}}$ decreases as the number of training samples $N$ increases while the error in the reconstructed $\hat{\bm{b}}$ is not very sensitive to $N$ (shown in Fig.~\ref{fig:example2} (c)).
To conclude, our local squared $W_2$ method can accurately reconstruct the linear model Eq.~\eqref{example2_model} with coefficient uncertainty when $\delta\in[0.05, 0.1]$ and sufficient training data is available.

\subsection{Training a neural network model with weight uncertainty}
\label{example1}
Next, we consider reconstructing the following nonlinear uncertainty model \cite{rooney2001design, bates1988nonlinear}:
\begin{equation}
    y(x;\omega)=\omega_1 \big(1- \exp(-\omega_2x)\big) + 5,
    \label{example1_model}
\end{equation}
where $\omega=(\omega_1, \omega_2)^T$ are the latent random variables in the model.
We assume that $x$ and $\omega$ are independent. 
We independently generate 1000 samples for training with $x\sim\mathcal{U}(-0.5, 0.5)$ and 
$(\omega_1, \omega_2)^T \sim\mathcal{N}\big((19.1426, 0.5311)^T, \Sigma\big)$, where $\Sigma$ is the covariance matrix:
\begin{equation}
    \Sigma= 
\begin{bmatrix}
6.22864 & -0.4322 \\
-0.4322 & 0.04124\\
\end{bmatrix}.
\end{equation}

A parameterized neural network model with weight uncertainty in Fig.~\ref{fig:nn_model} is used as $\hat{\bm{f}}$ in Eq.~\eqref{approximate} which approximates Eq.~\eqref{example1_model}. We aim to optimize the mean and variance of weights $\{w_{i,j,k}\}$ as well as the bias $\{b_{i, k}\}$ in the neural network by minimizing Eq.~\eqref{localw2} such that the distribution of $\hat{y}$ aligns with the distribution of $y$ given the same $x$. For testing, we generate a testing set $T=\cup_{i=0}^{11} T_i$ with each $T_i$ containing 100 samples $(x_{r, i}, y(x_{r, i}; \omega)), x_{r, i}=0.1i-0.5, r=1,\ldots,100$.

    \begin{figure}
    \centering
\includegraphics[width=\linewidth]{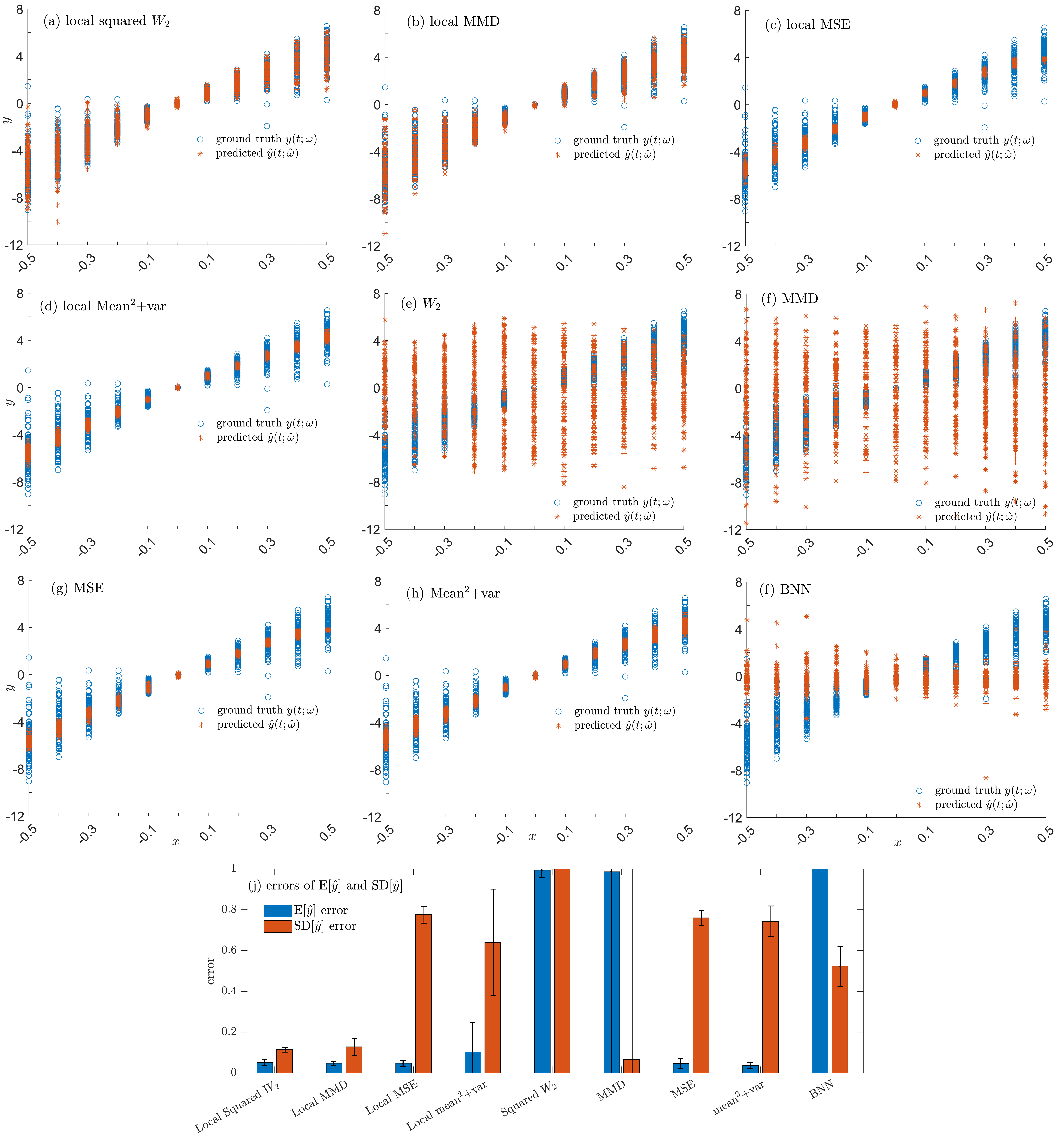}
    \caption{(a)-(i) The ground-truth $y(x;\omega)$ plotted against the predicted $\hat{y}(x;\hat{\omega})$ on the testing set. The predicted $\hat{y}$ is obtained by minimizing different loss functions (defined in Supplement~\ref{def_loss} and obtained by using the BNN method. (j) The average errors in the mean and the standard deviations of $\hat{y}$ on the testing set obtained by minimizing different loss functions and obtained by using the BNN method. The neural network model with weight uncertainty (Fig.~\ref{fig:nn_model}) trained by minimizing our local squared $W_2$ loss yields the smallest errors among all methods. Minimizing the local MMD is comparable to minimizing the local squared $W_2$ loss, likely because the MMD could also somehow measure the discrepancy between two probability distributions. However, unlike the analysis of our local squared $W_2$ method in Subsection~\ref{W2_loss}, there is no theoretical guarantee explaining why the local MMD loss could be successful.}
    \label{fig:example1}
\end{figure}

We compare our local squared $W_2$ loss function with other commonly used loss functions in UQ (definitions given in \ref{def_loss}) as well as a BNN method in \cite{mullachery2018bayesian, BNN_realization} which minimizes the Kullback-Leibler divergence. The neural network model in Fig.~\ref{fig:nn_model} trained by minimizing the local squared $W_2$ loss function yields $\hat{y}(x; \hat{\omega})$ whose distribution is close to the distribution of the ground truth $y(x;\omega)$ on the testing set (Fig.~\ref{fig:example1} (a)).
The performance of minimizing the local MMD loss is comparable to minimizing our local squared $W_2$ loss (Fig.~\ref{fig:example1} (b)).  
The distributions of the predicted $\hat{y}(\bm{x};\hat{\omega})$ by minimizing the local MSE and the local Mean$^2$+var deviate much from the distribution of the ground truth $y(\bm{x};\omega)$ at different $x$ (Fig.~\ref{fig:example1} (c)-(d)). Adopting any ``nonlocal" loss functions yields poor performance (Fig.~\ref{fig:example1} (e)-(h)). The BNN method generates $\hat{y}(\bm{x};\hat{\omega})$ whose distribution fails to match well with the distribution of the ground truth $y(\bm{x}; \omega)$. Overall, our local squared $W_2$ method can most efficiently train the neural network model with weight uncertainty in Fig.~\ref{fig:nn_model} to reconstruct the nonlinear model Eq.~\eqref{example1_model} among all loss functions and methods, with the smallest errors in $\E[\hat{y}(x; \hat{\omega})]$ and $\text{SD}[\hat{y}(x; \hat{\omega})]$ on the testing set (shown in Fig.~\ref{fig:example1} (j)). Additionally, when adopting the neural network model (Fig.~\ref{fig:nn_model}), our method does not require prior knowledge of the form of the nonlinear model Eq.~\eqref{example1_model}, nor does it demand prior distributions of the two latent model parameters $\omega_1, \omega_2$.


Two additional experiments are performed. First, we alter the standard deviations of the two parameters $\omega_1, \omega_2$ in Eq.~\eqref{example1_model}
and the standard deviation of the input $x$. We find that larger standard deviations in the latent model parameters and a larger standard deviation in the input $x$ both lead to a poorer reconstruction of the nonlinear model Eq.~\eqref{example1_model}, as shown in Supplement~\ref{model_parameter}. 

Second, we adjust the structure of the neural network model depicted in Fig.~\ref{fig:nn_model}. We discover that using a neural network with 2 hidden layers and 50 neurons per hidden layer equipped with the ResNet technique \cite{he2016deep}  leads to the smallest errors in the reconstructed $\E[\hat{y}(x;\hat{\omega})]$ and $\text{SD}[\hat{y}(x;\hat{\omega})]$. These results are presented in Supplement~\ref{nn_structure}.

\subsection{Application: reconstructing the distributions of concrete compressive strength associated with selected variables}
    \label{example3}
    As an application of our method, we reconstruct the distribution of the concrete compressive strength associated with selected continuous variables in the concrete compressive strength dataset \cite{misc_concrete_compressive_strength_165}. 
    This dataset documents concrete compressive strength along with various influential factors affecting it.
    We reconstruct the distribution of the concrete compressive strength (measured in MPa) based on six recorded continuous variables (measured in kg/m$^3$): cement, fly ash, water, superplasticizer, coarse aggregate, and fine aggregate. Previous models, such as those presented \cite{yeh1998modeling, chang1996mix}, depict concrete compressive strength as a continuous function of these variables. 
    We exclude two discrete, integer-valued variables: blast furnace slag and age. Additionally, other factors that might affect the concrete compressive strength are not recorded in this dataset.
    Thus, the concrete compressive strength might not be a deterministic function of the six selected variables. 
    Instead, we can regard the six selected variables as the input $\bm{x}$, the neglected variables as the latent variables $\omega$, and the concrete compressive strength as $y$ in Eq.~\eqref{model_object}. Then, we may use the approximate model Eq.~\eqref{approximate} to approximate the distribution of the concrete compressive strength given $\bm{x}$. 
 
 We compare the neural network model with weight uncertainty in Fig.~\ref{fig:nn_model}, trained by minimizing the local squared $W_2$ loss function Eq.~\eqref{localw2}, against a neural network without weight uncertainty (\textit{i.e.}, setting $\sigma_{i, j, k}\equiv 0$ for the weights $w_{i, j, k}$ in Fig.~\ref{fig:nn_model}), trained by minimizing the MSE loss (defined in Supplement~\ref{def_loss}). When using a neural network without weight uncertainty, the approximate model is deterministic:
    \begin{equation}
        \hat{y}=\hat{f}(\bm{x}).
    \end{equation}

    The training set $S$ consists of the first two-thirds of samples in the dataset. The remaining one-third of the samples constitute the testing set, denoted by $T$. 
    When calculating the errors in the predicted mean and standard deviation defined in Eq.~\eqref{error_metric}, we use 
$\E\Big[y(\tilde{\bm{x}};\omega)\Big|\big|\tilde{\bm{x}}-\bm{x}|_{x}\leq\delta_0\Big]$
     to approximate $\E[y(\bm{x},\omega)]$, and $
        \E\Big[\hat{y}(\tilde{\bm{x}};\hat{\omega})\Big|\big|\tilde{\bm{x}}-\bm{x}|_{x}\leq\delta_0\Big]$ to approximate $\E[\hat{y}(\bm{x},\omega)]$, respectively. We also use $\text{SD}\Big[y(\tilde{\bm{x}};\omega)\Big|\big|\tilde{\bm{x}}-\bm{x}|_{x}\leq\delta_0\Big]$ to approximate $\text{SD}[y(\bm{x},\omega)]$ and $\text{SD}\Big[\hat{y}(\tilde{\bm{x}};\hat{\omega})\Big|\big|\tilde{\bm{x}}-\bm{x}|_{x}\leq\delta_0\Big]$ to approximate $\text{SD}[\hat{y}(\bm{x},\hat{\omega})]$. Only $(\bm{x},y)\in T$ for which there are at least 5 samples $(\tilde{\bm{x}}, y(\tilde{\bm{x}}, \omega))\in T$ satisfying $|\tilde{\bm{x}}-\bm{x}|_{x}\leq\delta_0$ are used for calculating the errors in the predicted mean and standard deviation. We take $\delta_0=0.2$ and $|\bm{x}|_{x}^2\coloneqq \sum_{i=1}^6c_i^2x_i^2$, where $c_i$ is obtained by minimizing:
    \begin{equation}
\sum_{j=1}^{|S|}\Big(y_j(\bm{x}_j) - \sum_{i=1}^6 c_ix_{j, i} - c_0\Big)^2.
    \end{equation}

    \begin{figure}
    \centering
\includegraphics[width=\linewidth]{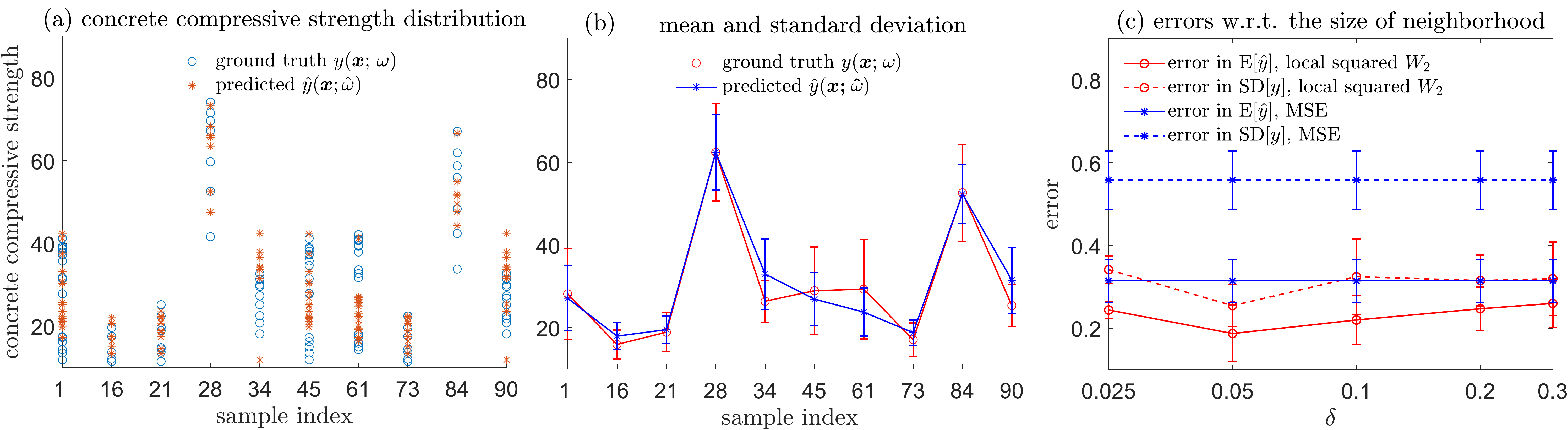}
    \caption{(a) The predicted $\hat{\bm{y}}(\tilde{\bm{x}}; \hat{\omega})$ using the neural network with weight uncertainty in Fig.~\ref{fig:nn_model} versus the ground truth $\bm{y}(\tilde{\bm{x}};\omega)$ for 10 randomly selected samples $(\bm{x}, y)$ in the testing set $T$ satisfying $\Big|\big\{\tilde{\bm{x}}\in T: |\tilde{\bm{x}} - \bm{x}|_{x}\leq\delta_0  \big\} \Big|\geq 5$.  (b) The predicted means $\E\Big[\hat{y}(\tilde{\bm{x}};\hat{\omega})\Big|\big|\tilde{\bm{x}}-\bm{x}|_{x}\leq\delta_0\Big]$ and the predicted standard deviations $\text{SD}[\hat{y}(\tilde{\bm{x}};\hat{\omega})\Big|\big|\tilde{\bm{x}}-\bm{x}|_{x}\leq\delta_0\Big]$ using the neural network with weight uncertainty model in Fig.~\ref{fig:nn_model} versus the ground truth mean $\E[y(\tilde{\bm{x}};\omega)\Big|\big|\tilde{\bm{x}}-\bm{x}|_{x}\leq\delta_0\Big]$ and standard deviation $\text{SD}[y(\tilde{\bm{x}};\omega)\Big|\big|\tilde{\bm{x}}-\bm{x}|_{x}\leq\delta_0\Big]$ for 10 randomly selected samples $(\bm{x}, y)$ in the testing set $T$ satisfying $\Big|\big\{\tilde{\bm{x}}\in T: |\tilde{\bm{x}} - \bm{x}|_{x}\leq\delta_0  \big\} \Big|\geq 5$. In both (a) and (b), we use the neural network with weight uncertainty model trained by minimizing the local squared $W_2$ loss Eq.~\eqref{localw2} with $\delta=0.05$ ($\delta$ is the size of the neighborhood in the loss function and is different from $\delta_0=0.2$). (c) The average relative errors in the mean and standard deviation of predictions $\hat{y}$ generated by the neural network model with weight uncertainty, trained by minimizing the local $W_2$ loss Eq.~\eqref{localw2} versus the average relative errors in the mean and standard deviation of predictions $\hat{y}$ generated by the neural network model without weight uncertainty, trained by minimizing the MSE loss. Note that in (c), the size of neighborhood $\delta$ only applies to using the neural network model with weight uncertainty trained by minimizing the local $W_2$ loss Eq.~\eqref{localw2}. Thus, when using the neural network model without weight uncertainty trained by minimizing the MSE loss, the results do not change with $\delta$.}
    \label{fig:example3}
\end{figure}

Our local squared $W_2$ method yields distributions of $\hat{y}(\tilde{\bm{x}}, \hat{\omega}), |\tilde{\bm{x}}-\bm{x}|_{x}\leq\delta_0$ that align well with the distributions of the ground truth $y(\tilde{\bm{x}}, \omega), |\tilde{\bm{x}}-\bm{x}|_{x}\leq\delta_0$ on the testing set for different $\bm{x}$. As illustrations, in Fig.~\ref{fig:example3} (a)(b), we plot the distributions of the predicted $\hat{y}(\tilde{\bm{x}}, \hat{\omega}), |\tilde{\bm{x}}-\bm{x}|_{x}\leq\delta_0$ against the distributions of the ground truth $y(\tilde{\bm{x}}, \omega), |\tilde{\bm{x}}-\bm{x}|_{x}\leq\delta_0$ for 10 randomly selected samples $(\bm{x}_i, y_i)\in T$ such that the cardinality of the set $\big|\big\{(\bm{x}_j, y_j)\in T: |\bm{x}_j-\bm{x}|_{x}\leq\delta_0\big\}\big|\geq 5$.
 In Fig.~\ref{fig:example3} (c), 
we plot the errors in the predicted mean $\E[\hat{y}(\bm{x}; \hat{\omega})]$ and standard deviation $\text{SD}[\hat{y}(\bm{x}; \hat{\omega})]$ obtained from the two methods: (i)
using the neural network model with weight uncertainty trained by minimizing the local squared $W_2$ loss Eq.~\eqref{localw2} and (ii) using the neural network model without weight uncertainty trained by minimizing the MSE loss. 
The error in the predicted standard deviation 
is much smaller when using (i) than using (ii). 
Thus, our proposed local squared $W_2$ method could more accurately reconstruct the distribution of the concrete compressive strength associated with different $\bm{x}$, compared to using a neural network model without weight uncertainty trained by minimizing the MSE. 

 Similar to the results shown in Fig.~\ref{fig:example2} (b) on reconstructing the linear model Eq.~\eqref{example2_model}, it is most appropriate to choose a moderate $\delta$ in the loss function Eq.~\eqref{localw2} when using our local squared $W_2$ method to reconstruct the distribution of concrete compressive strength on the six selected variables. Errors in the predicted mean and standard deviation can both be well controlled
 when $\delta\in[0.05, 0.1]$, shown in Fig.~\ref{fig:example3} (c). When $\delta$ is too small or too large, the accuracy of the predicted mean and standard deviation decreases.

\subsection{Reconstructing an ODE with parameter uncertainty}
\label{example4}
Finally, we consider an ODE with uncertain latent parameters:
\begin{eqnarray}
    \frac{\text{d}\bm{y}(\bm{y}_0, t; \omega)}{\text{d}t} = \bm{g}(\bm{y}(\bm{y}_0, t; \omega), t, \omega),\,\, \omega\in\Omega, t\in[0, T], \,\,
    \bm{y}(\bm{y}_0, 0; \omega)=\bm{y}_0\in\mathbb{R}^n,
    \label{ode_uncertain}
\end{eqnarray}
where $\omega$ are latent parameters with uncertainty. We aim at using another ODE to approximate Eq.~\eqref{ode_uncertain}:
\begin{eqnarray}
    \frac{\text{d}\hat{\bm{y}}(\bm{y}_0, t; \hat{\omega})}{\text{d}t} = \hat{\bm{g}}\big(\hat{\bm{y}}( \bm{y}_0, t;\hat{\omega}), t, \hat{\omega}\big), \,\,\hat{\omega}\in\hat{\Omega}, t\in[0, T],\,\,
    \hat{\bm{y}}(\bm{y}_0, 0; \hat{\omega})=\bm{y}_0\in\mathbb{R}^n,\,\,
    \label{approximate_ode_model}
\end{eqnarray}
where $\hat{\omega}$ are uncertain parameters in $\hat{\bm{g}}$. In the following, we regard the initial condition $\bm{y}_0$ as the input and $\bm{y}(\bm{y}_0, t;\omega)$ as the output (the norms of the input $\bm{y}_0$ and output $\bm{y}(\bm{y}_0,t;\omega)$ are the same $l^2$ norm $\|\cdot\|$ for vectors). Fixing $t\in[0, T]$, if there exists a Lipschitz constant $L$ such that:
\begin{equation}
    \big\|\bm{y}(\bm{y}_0, t; \omega)-\bm{y}( \tilde{\bm{y}}_0,t; \omega)\big\|\leq L\|\bm{y}_0-\tilde{\bm{y}}_0\|,\,\, \big\|\hat{\bm{y}}(\bm{y}_0, t;\hat{\omega})-\hat{\bm{y}}( \tilde{\bm{y}}_0,t; \hat{\omega})\big\|\leq L\|\bm{y}_0-\tilde{\bm{y}}_0\|,\,\, \forall \bm{y}_0, \tilde{\bm{y}}_0\in\mathbb{R}^n,
\end{equation}
then Theorem~\ref{theorem_1} 
implies that minimizing the local squared $W_2$ distance $\tilde{W}_{2, \delta}^{2, \text{e}}(\bm{y}(\bm{y}_0, t;\omega), \hat{\bm{y}}(\bm{y}_0, t;\hat{\omega}))$ in Eq.~\eqref{localw2} could be effective in comparing the distributions of $\bm{y}(\bm{y}_0, t;\omega)$ and $\hat{\bm{y}}(\bm{y}_0, t;\hat{\omega})$. Additionally, if 
$\bm{g}$ and $\hat{\bm{g}}$ in Eqs.~\eqref{ode_uncertain} and Eq.~\eqref{approximate_ode_model} are uniformly Lipschitz continuous in $\bm{y}$ and $t$, then a large $\tilde{W}_{2, \delta}^{2, \text{e}}(\bm{y}(\bm{y}_0, t;\omega), \bm{y}(\bm{y}_0, t;\hat{\omega}))$ implies that there exists a pair $(\bm{y}, s)\in\mathbb{R}^d\times [0, t]$ such that $\hat{\eta}_{\bm{y}, s}$ fails to align well with $\eta_{\bm{y}, s}$ ($\eta_{\bm{y}, s}$ and $\hat{\eta}_{\bm{y}, s}$ denote the distributions of $\bm{g}(\bm{y}, s, \omega)$ and $\hat{\bm{g}}(\bm{y}, s, \hat{\omega})$, respectively). More analysis on this is provided in Supplement~\ref{ode_proof}.

    We reconstruct the following 4D ODE with a latent random variable (Example 4.3 in \cite{sonday2011eigenvalues}):
    \begin{equation}
        \begin{aligned}
            &\frac{\text{d}{y_1}}{\text{d} t} = (0.05 + \omega)  y_1 + 0.05  y_3  - (1 - \omega^2)  y_2,\\
            &\frac{\text{d}{y_2}}{\text{d} t} = (1 - \omega^2)  y_0 + 0.05  y_4,\\
            &\frac{\text{d}{y_3}}{\text{d} t} = (-0.05 + \omega)  y_3 - (1-\omega^2)  y_4,\\
            &\frac{\text{d}{y_4}}{\text{d} t} = (1-\omega^2)  y_3, \,\,\,\, t\in[0, 2].
        \end{aligned}
        \label{4dodemodel}
    \end{equation}
    Let $\bm{g}(\bm{y}, \omega)\coloneqq\big(g_1(\bm{y}, \omega), g_2(\bm{y}, \omega), g_3(\bm{y}, \omega), g_4(\bm{y}, \omega)\big)^T$ represent the RHS of Eq.~\eqref{4dodemodel}. We set the initial condition $\bm{y}_0\sim\mathcal{N}((1,1,1,1)^T, a^2 I_4)$, where $I_4\in\mathbb{R}^{4\times4}$ denotes the identity matrix.
    In Eq.~\eqref{4dodemodel}, we let $\omega\sim\mathcal{U}(-\sigma, \sigma)$. We independently sample the initial condition $\bm{y}_0$ and $\omega$, generating 100 trajectories for both the training and testing sets.
     The neural network model with weight uncertainty in Fig.~\ref{fig:nn_model} is adopted as the RHS $\hat{\bm{g}}$ in the approximate ODE model Eq.~\eqref{approximate_ode_model}, which aims at approximating Eq.~\eqref{4dodemodel} (we also set $\hat{\bm{g}}$ to be time-homogeneous, \textit{i.e.}, $\hat{\bm{g}} = \hat{\bm{g}}(\bm{y},\hat{\omega})$).
The means and variances of the weights as well as the biases in the neural network are optimized by minimizing the time-averaged local squared $W_2$ distance:
\begin{equation}
    \frac{1}{m+1}\sum_{t=0}^m \tilde{W}_{2, \delta}^{2, \text{e}}\big(\bm{y}(\bm{y}_0, t_i;\omega), \hat{\bm{y}}(\bm{y}_0, t_i; \hat{\omega})\big),\,\,t_i=i\Delta t,\,\, \Delta t=\frac{2}{m}.
    \label{time_w2}
\end{equation}
    The following error metrics:
    \begin{equation}
        \text{error in}~\hat{\bm{y}}\coloneqq  \frac{\int_0^2\tilde{W}_{2,\delta_0}^{2, \text{e}}\big(\bm{y}(\bm{y}_0, s;\omega), \hat{\bm{y}}(\bm{y}_0, s;\hat{\omega})\big)\text{d}s}{\int_0^2\tilde{W}_{2,\delta_0}^{2, \text{e}}\big(\bm{y}(\bm{y}_0, s;\omega), 0\big)\text{d}s},\,\,\,\,\,\,
    \text{error in}~\hat{\bm{g}}\coloneqq \frac{\int_0^2\E\big[W_2^2(\eta_{\bm{y}(s), s}, \hat{\eta}_{\bm{y}(s), s})\big]\text{d}s}{\int_0^2\E\big[\|\bm{g}(\bm{y}(s), s,\omega)\|^2\big]\text{d}s}
    \label{ode_error}
\end{equation}
are used to quantify the errors of $\hat{\bm{y}}$ and $\hat{\bm{g}}$ in the reconstructed ODE \eqref{approximate_ode_model}, respectively. We set $\delta=\delta_0=0.1$ in Eqs.~\eqref{time_w2} and~\eqref{ode_error} and $m=100$ in Eq.~\eqref{time_w2}.

    \begin{figure}[h!]
    \centering
\includegraphics[width=\linewidth]{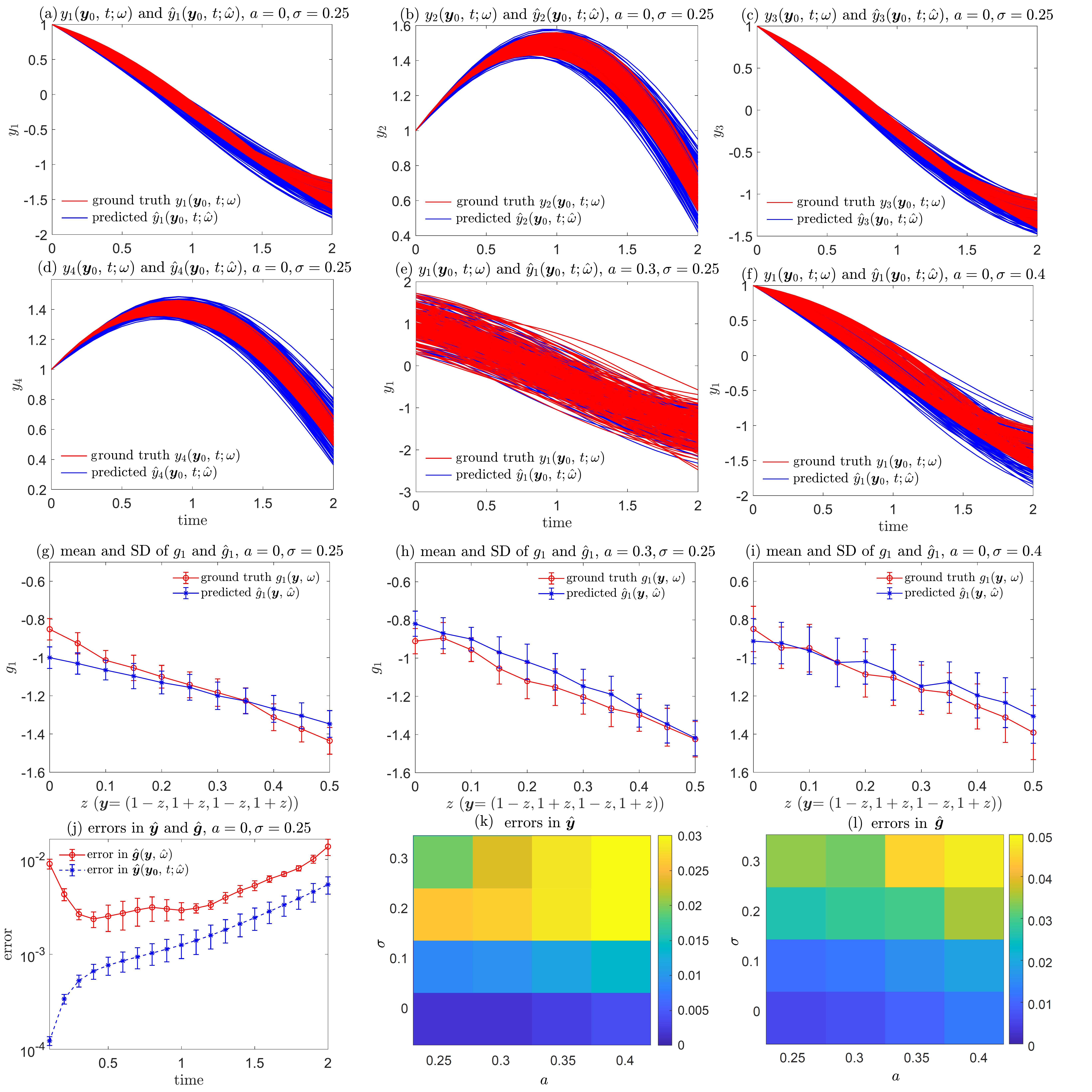}
    \caption{(a)-(d) Comparison between the ground truth $y_i(\bm{y}_0, t;\omega)$ and the predicted $\hat{y}_i(\bm{y}_0,t;\hat{\omega}), i=1,2,3,4$ when the standard deviation of the initial condition $a=0$ and $\sigma=0.25$ in the distribution of the model parameter $\omega$.  (e) Comparison between the ground truth $y_1(\bm{y}_0, t;\omega)$ and the predicted $\hat{y}_1(\bm{y}_0,t;\hat{\omega})$ ($a=0.3, \sigma=0.25$). (f) The ground truth $y_1(\bm{y}_0, t;\omega)$ versus the predicted $\hat{y}_1(\bm{y}_0,t;\hat{\omega})$ ($a=0, \sigma=0.4$).
    In (a)-(f), the ground truth $y_i(\bm{y}_0, t;\omega)$ are trajectories in the testing set and the predicted $\hat{y}_i(\bm{y}_0, t;\omega)$ are 
    generated based on the initial conditions in the testing set (the testing set and the training set share the same $a, \sigma$).
    (g) Means and standard deviations of the ground truth $g_1(\bm{y}, \omega)$ versus the predicted $\hat{g}_1(\bm{y}, \hat{\omega})$ ($a=0, \sigma=0.25$). (h) Means and standard deviations of the ground truth $g_1(\bm{y}, \hat{\omega})$ versus the predicted $\hat{g}_1(\bm{y}, \hat{\omega})$ ($a=0, \sigma=0.4$). (i) Means and standard deviations of the ground truth $g_1(\bm{y}, \omega)$ versus the predicted $\hat{g}_1(\bm{y}, \hat{\omega})$ ($a=0.3,\sigma=0.25$). In (g)-(i), we let $\bm{y}=(1-z, 1+z, 1-z, 1+z), z=0.05i, i=0,\ldots,10$. (j) The errors $\frac{\tilde{W}_2^2\big(\bm{y}(\bm{y}_0, t;\omega), \hat{\bm{y}}(\bm{y}_0, t; \hat{\omega})\big)}{\E\big[\|\bm{y}(\bm{y}_0, t;\omega)\|^2\big]}$ and $\frac{\E\Big[W_2^2\big(\eta_{\bm{y}(t), t}, \hat{\eta}_{\bm{y}(t), t}\big)\Big]}{\E\big[\|\bm{g}(\bm{y}(t), t,\omega)\|^2\big]}$ in $\hat{\bm{y}}$ and $\hat{\bm{g}}$ at different time $t$ when $a=0, \sigma=0.25$. $\eta_{\bm{y}, s}$ and $\hat{\eta}_{\bm{y}, s}$ denote the distributions of $\bm{g}(\bm{y}, s, \omega)$ and $\hat{\bm{g}}(\bm{y}, s, \hat{\omega})$, respectively. (k) Errors in $\hat{\bm{y}}$ (defined in Eq.~\eqref{ode_error}) for different $a$ and $\sigma$ . (l) Errors in $\hat{\bm{g}}$ and $\hat{\bm{y}}$ (defined in Eq.~\eqref{ode_error}) for different $a$ and $\sigma$. The errors of $\hat{\bm{y}}$ and $\hat{\bm{g}}$ are evaluated on the testing sets.}
    \label{fig:example4}
\end{figure}

Overall, by minimizing the time-averaged local squared $W_2$ distance Eq.~\eqref{time_w2} and using the neural network with weight uncertainty as $\hat{\bm{g}}$, 
the distribution of trajectories generated by our reconstructed model, Eq.~\eqref{approximate_ode_model}, closely aligns with the distribution of trajectories generated by the ground truth ODE~\eqref{4dodemodel}, across different values of $a, \sigma$. To demonstrate, we plot the ground truth $y_i(\bm{y}_0, t; \omega)$ and the reconstructed $\hat{y}_i(\bm{y}_0, t;\hat{\omega}), i=1,2,3,4$ when: $a=0, \sigma=0.25$ (Fig.~\ref{fig:example4}(a)(b)(c)(d)). We also plot the ground truth $y_1(\bm{y}_0, t; \omega)$ against the reconstructed $\hat{y}_1(\bm{y}_0, t;\hat{\omega})$ for $a=0, \sigma=0.4$ (Fig.~\ref{fig:example4}(e)) and $a=0.3, \sigma=0.25$ (Fig.~\ref{fig:example4}(f)). 
Additionally, the distributions of the ground truth $\bm{g}$
are effectively represented by the distribution of the reconstructed
$\hat{\bm{g}}$ when inputting the same $\bm{y}$ for different values of $\sigma$ and $a$. As an example, we plot the means and standard deviations of ground truth $g_1$ against those of the predicted $\hat{g}_1$ along the line $\bm{y}=(1-z, 1+z, 1-z, 1+z)$ when: $a=0, \sigma=0.25$ (Fig.~\ref{fig:example4} (g)), $a=0.3, \sigma=0.25$ (Fig.~\ref{fig:example4} (h)), and $a=0, \sigma=0.4$ (Fig.~\ref{fig:example4} (i)). 
 In Fig.~\ref{fig:example4} (j), the error in $\hat{\bm{y}}$ grows over time due to error accumulation but is kept below 0.1 for all $t$.
 From Fig.~\ref{fig:example4} (k)
(l), larger values of $a$ and $\sigma$ correspond to larger
 errors in $\hat{\bm{y}}$ and $\hat{\bm{g}}$. 
One potential explanation is that larger values of 
$a$ and 
$\sigma$
result in sparser training trajectories, rendering it more challenging to accurately reconstruct the underlying model  Eq.~\eqref{4dodemodel}.

\section{Discussion}
\label{conclusion}
In our paper, we proposed a local squared $W_2$ method for reconstructing uncertainty models in UQ through minimizing a local squared $W_2$ loss Eq.~\eqref{localw2}. The local squared $W_2$ loss function
could be efficiently evaluated using empirical distributions of observed data. We showcased the effectiveness of our approach across various UQ tasks and showed that it outperformed some benchmark methods.

 As future directions, it would be promising to conduct further analysis to determine the optimal size of neighborhood $\delta$ in Eq.~\eqref{localw2} as well as to identify an appropriate norm $|\cdot|_x$ for the input $\bm{x}$ for our method. It would be beneficial to develop an appropriate surrogate model as $\hat{\bm{f}}$ in Eq.~\eqref{approximate}.
Furthermore, exploring the application of our local squared $W_2$ method to other UQ problems merits further investigation. For example, integrating our local squared $W_2$ method with recent stochastic differential equation reconstruction methods \cite{xia2024squared, xia2024an} could enable the reconstruction of dynamical systems characterized by both uncertain parameters and intrinsic fluctuations.

\section{Methods}
\subsection{Local squared $W_2$ method for uncertainty quantification}
\label{W2_loss}

In this subsection, we analyze the novel local squared $W_2$ method we propose in Subsection~\ref{local_definition} for reconstructing the uncertainty model Eq.~\eqref{model_object}. First, we formally define the $W$-distance between two $d$-dimensional random variables.
\begin{definition} 
\rm 
\label{def:W2}
For two random variables $\bm{y}, \hat{\bm{y}}\in\mathbb{R}^n$,
we assume that
\begin{equation}
    \E[\|\bm{y}\|^2]\leq \infty,\,\,\,\,\E[\|\hat{\bm{y}}\|^2]\leq \infty,
\end{equation}
where the norm $\|\cdot\|$ is the $l^2$ norm of a vector.
We denote probability distributions of $\bm{y}$ and $\hat{\bm{y}}$ by $\mu$ and  $\hat{\mu}$, respectively. 
The $W_2$-distance
$W_2(\mu, \hat{\mu})$ is defined as
\begin{equation}
W_p(\mu, \hat{\mu}) \coloneqq \inf_{\pi(\mu, \hat \mu)}
\E_{(\bm{y}, \hat{\bm
{y}})\sim \pi(\mu, \hat \mu)(\bm{y}, \hat{\bm
{y}})}\big[\|{\bm{y}} - \hat{{\bm{y}}}\|^{2}\big]^{\frac{1}{2}}.
\label{pidef}
\end{equation}
$\pi(\mu, \hat \mu)(\bm{y}, \hat{\bm
{y}})$ iterates over all \textit{coupled} distributions
of $\bm{y}, \hat{\bm{y}}$, defined by the condition
\begin{equation}
\begin{cases}
{\pi(\mu, \hat \mu)}\left(A \times  \mathbb{R}^n\right) ={\mu}(A),\\
{\pi(\mu, \hat \mu)}\left( \mathbb{R}^n\times A\right) = {\hat \mu}(A), 
\end{cases}\forall A\in \mathcal{B}( \mathbb{R}^n),
\label{pi_def}
\end{equation}
where $\mathcal{B}(\mathbb{R}^n)$ denotes the Borel
$\sigma$-algebra associated with $\mathbb{R}^n$.
\end{definition}

To simplify our analysis, we make the following assumptions.
\begin{assumption}
\rm
\label{assumptions_w2}
We assume that the following conditions hold for the uncertainty model Eq.~\eqref{model_object} and the approximate model Eq.~\eqref{approximate}.

\begin{enumerate}
    \item We assume that $\bm{y}$ and $\hat{\bm{y}}$ in Eqs.~\eqref{model_object} and \eqref{approximate} are uniformly bounded:
\begin{equation}
    \|\bm{y}\|\leq \sqrt{M}, \,\,\,\, \|\hat{\bm{y}}\|\leq \sqrt{M},
    \label{upperboundy}
\end{equation}
where $\|\cdot\|$ denotes the $l^2$ norm of a vector.
\item We assume that $\bm{f}$ and $\hat{\bm{f}}$ on the RHSs of Eqs.~\eqref{model_object} and \eqref{approximate} are uniformly Lipschitz continuous on $\bm{x}$, \textit{i.e.}, there exists $L\leq\infty$ such that
\begin{equation}
    \|\bm{f}(\bm{x};\omega) - \bm{f}(\hat{\bm{x}};\omega)\|\leq L |\bm{x}-\hat{\bm{x}}|_{x}, \,\, \|\hat{\bm{f}}(\bm{x};\omega) - \hat{\bm{f}}(\hat{\bm{x}};\omega)\|\leq L |\bm{x}-\hat{\bm{x}}|_x,\,\, \forall x, \hat{x}\in D, 
    \label{l_condition}
\end{equation}
where $|\cdot|_{x}$ is a norm for $\bm{x}\in\mathbb{R}^d$.
\item The random variable $\omega$ is independent of $\bm{x}$; the random variable $\hat{\omega}$ is also independent of $\bm{x}$.
\end{enumerate}
\end{assumption}

We denote the distributions of $\bm{y}(\bm{x}; \omega)$ and $\hat{\bm{y}}(\bm{x}; \hat{\omega})$ by $\mu_{\bm{x}}$ and $\hat{\mu}_{\bm{x}}$, respectively.
From Eq.~\eqref{pidef}, $W_2^2(\mu_{\bm{x}}, \hat{\mu}_{\bm{x}})\geq 0$ and $W_2^2(\mu_{\bm{x}}, \hat{\mu}_{\bm{x}})= 0$ only when $\bm{y}=\hat{\bm{y}}$ almost surely under a coupling measure $\pi(\mu_{\bm{x}}, \hat{\mu}_{\bm{x}})(\bm{y}, \hat{\bm{y}})$. This indicates that $\mu_{\bm{x}}=\hat{\mu}_{\bm{x}}\,\, \text{a.s.}$ since the marginal distributions of $\pi(\mu_{\bm{x}}, \hat{\mu}_{\bm{x}})(\bm{y}, \hat{\bm{y}})$ are $\mu_{\bm{x}}$ and $\hat{\mu}_{\bm{x}}$, respectively. Generally, the smaller $W_2^2(\mu_{\bm{x}}, \hat{\mu}_{\bm{x}})$ is, the more similar the probability measure $\hat{\mu}_{\bm{x}}$ is to the probability measure $\mu_{\bm{x}}$. 

Next, we consider the following quantity (the same as Eq.~\eqref{approx_quantity}):
\begin{equation}
    \tilde{W}_2^2(\bm{y}, \hat{\bm{y}}) \coloneqq \int_D W_2^2\big(\mu_{\bm{x}}, \hat{\mu}_{\bm{x}}\big)\nu(\text{d}\bm{x}),
    \label{squarew2loss}
\end{equation}
where $\nu$ is the distribution of $\bm{x}\in D$. We assume that $\nu$ is a non-degrading probability measure on $D$. Then,  $\tilde{W}_2^2(\bm{y}, \hat{\bm{y}})$ defined in Eq.~\eqref{squarew2loss} equals to 0 only when 
\begin{equation}
    W_2^2\big(\mu_{\bm{x}}, \hat{\mu}_{\bm{x}}\big)=0,\,\, \text{a.e.}, x\in D.
\end{equation}
Thus, $\tilde{W}_2^2(\bm{y}, \hat{\bm{y}})$ is minimized only when then the distribution of $\bm{y}(\bm{x};\omega)$ in the uncertainty model Eq.~\eqref{model_object} can be perfectly matched by the distribution of $\hat{\bm{y}}(\bm{x};\omega)$ in the approximate model Eq.~\eqref{approximate} a.e. in $D$.

However, it is usually difficult to evaluate the loss function Eq.~\eqref{squarew2loss} when only a finite set of observations $S\coloneqq\{(\bm{x}_i, \bm{y}_i)\}_{i=1}^N$ is available. If $\bm{x}\in D$ is a continuous random variable with a non-degrading probability density function $\nu(\bm{x})$, then almost surely $\bm{x}_i\neq \bm{x}_j$ for any $\bm{x}_i, \bm{x}_j\in S$ when $i\neq j$. Consequently, it is challenging to evaluate $\mu_{\bm{x}}$.
To tackle this problem, we propose the local squared $W_2$ distance loss function Eq.~\eqref{localw2} as an approximation to $\tilde{W}_2^2(\bm{y}, \hat{\bm{y}})$ in Eq.~\eqref{squarew2loss} (also Eq.~\eqref{approx_quantity}). We can prove the following theorem that gives an error bound of using the local squared $W_2$ loss function $\tilde{W}_{2, \delta}^{2, \text{e}}(\bm{y}, \hat{\bm{y}})$ in Eq.~\eqref{localw2} to approximate $\tilde{W}_2^2(\bm{y}, \hat{\bm{y}})$.

\begin{theorem}
\rm
    \label{theorem_1}
    For each $x\in D$, we denote the number of samples $(\tilde{\bm{x}}, \tilde{\bm{y}})\in S$ such that $|\tilde{\bm{x}}-\bm{x}|_x\leq\delta$ to be $N(\bm{x}, \delta)$. We denote the total number of samples of the empirical distribution to be $N$. Assuming that each input $\bm{x}$ is independently sampled from the probability distribution $\nu$, then
    we have the following error bound
    \begin{equation}
    \E\Big[\big|\tilde{W}_2^2(\bm{y}, \hat{\bm{y}}) - \tilde{W}_{2, \delta}^{2, \text{e}}(\bm{y}, \hat{\bm{y}})\big|\Big] \leq \frac{4M}{\sqrt{N}} + 8CM\E\big[h(N(x, \delta), n)\big] + 8\sqrt{M}L\delta
        \label{theorem2_result}
    \end{equation}
    where $\tilde{W}_{2, \delta}^{2, \text{e}}(\bm{y}, \hat{\bm{y}})$ is the local $W_2$ distance defined in Eq.~\eqref{localw2} and $\tilde{W}_{2}^{2}(\bm{y}, \hat{\bm{y}})$ is defined in Eq.~\eqref{approx_quantity}.
    $M$ is the upper bound for $\|\bm{y}\|, \|\hat{\bm{y}}\|$ in Eq.~\eqref{upperboundy}, $C$ is a constant, $N$ is the total number of training data $(\bm{x}, \bm{y})$, and $L$ is the Lipschitz constant in Eq.~\eqref{l_condition}. In Eq.~\eqref{theorem2_result}, 
    \begin{equation}
h(N, d)\coloneqq\left\{
\begin{aligned}
&2N^{-\frac{1}{4}}\log(1+N)^{\frac{1}{2}}, d\leq4,\\
&2N^{-\frac{1}{d}}, d> 4
\end{aligned}
\right.
\label{t_def}
\end{equation}
\end{theorem}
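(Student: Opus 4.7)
The plan is to interpolate between $\tilde{W}_2^2(\bm{y},\hat{\bm{y}})$ and $\tilde{W}_{2,\delta}^{2,\text{e}}(\bm{y},\hat{\bm{y}})$ by peeling off one source of error at a time. Introduce
\[
J_1 \coloneqq \int_D W_2^2(\mu_{\bm{x}},\hat{\mu}_{\bm{x}})\,\nu^{\text{e}}(\text{d}\bm{x}),\qquad
J_2 \coloneqq \int_D W_2^2(\mu_{\bm{x},\delta},\hat{\mu}_{\bm{x},\delta})\,\nu^{\text{e}}(\text{d}\bm{x}),
\]
so that a triangle inequality gives $|\tilde{W}_2^2-\tilde{W}_{2,\delta}^{2,\text{e}}|\le |\tilde{W}_2^2-J_1|+|J_1-J_2|+|J_2-\tilde{W}_{2,\delta}^{2,\text{e}}|$. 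I expect these three pieces to produce, respectively, the three terms on the right of (\ref{theorem2_result}).

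For $|\tilde{W}_2^2-J_1|$, the integrand $\bm{x}\mapsto W_2^2(\mu_{\bm{x}},\hat{\mu}_{\bm{x}})$ is a deterministic function on $D$, uniformly bounded by $4M$ (since $\|\bm{y}\|,\|\hat{\bm{y}}\|\le\sqrt M$ forces $W_2\le 2\sqrt M$). Because $\nu^{\text{e}}$ averages $N$ i.i.d.\ draws from $\nu$, Jensen applied to the Monte Carlo variance yields $\E|\tilde{W}_2^2-J_1|\le \sqrt{\mathrm{Var}/N}\le 4M/\sqrt N$. For $|J_1-J_2|$, apply the identity $a^2-b^2=(a-b)(a+b)$ pointwise in $\bm{x}$: the sum $W_2(\mu_{\bm{x}},\hat{\mu}_{\bm{x}})+W_2(\mu_{\bm{x},\delta},\hat{\mu}_{\bm{x},\delta})\le 4\sqrt M$, while the difference is controlled via the reverse triangle inequality by $W_2(\mu_{\bm{x}},\mu_{\bm{x},\delta})+W_2(\hat{\mu}_{\bm{x}},\hat{\mu}_{\bm{x},\delta})$. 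Each of these last two terms is $\le L\delta$: using $\omega\perp\bm{x}$, couple $\bm{y}(\bm{x};\omega)$ to $\bm{y}(\tilde{\bm{x}};\omega)$ through the \emph{same} $\omega$ and invoke (\ref{l_condition}) with $|\bm{x}-\tilde{\bm{x}}|_x\le\delta$. Integrating the resulting pointwise bound $8\sqrt M L\delta$ against the probability measure $\nu^{\text{e}}$ preserves it.

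For $|J_2-\tilde{W}_{2,\delta}^{2,\text{e}}|$, repeat the $a^2-b^2$ factorization; the prefactor is again $4\sqrt M$, and what remains is to control $\E[W_2(\mu_{\bm{x},\delta}^{\text{e}},\mu_{\bm{x},\delta})]$ and its $\hat\mu$ analog. Conditional on the count $N(\bm{x},\delta)$ and on which training indices fall into the $\delta$-ball, the independence $\omega\perp\bm{x}$ makes the corresponding $\bm{y}_i$ i.i.d.\ draws from the $\nu$-weighted mixture $\mu_{\bm{x},\delta}$. The Fournier--Guillin empirical Wasserstein rate, applied to distributions supported in a ball of diameter $\le 2\sqrt M$, then yields $\E[W_2(\mu_{\bm{x},\delta}^{\text{e}},\mu_{\bm{x},\delta})\mid N(\bm{x},\delta)]\le C'\sqrt M\, h(N(\bm{x},\delta),n)$; the two branches in the definition of $h$ track the $n\le 4$ vs.\ $n>4$ dichotomy in Fournier--Guillin. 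Combining the prefactor with the two $\mu,\hat\mu$ contributions and integrating against $\nu^{\text{e}}$ yields $8CM\,\E[h(N(\bm{x},\delta),n)]$.

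The main obstacle is Step~3: identifying the $\bm{y}$-observations inside the $\delta$-ball, conditionally on the random count $N(\bm{x},\delta)$, as i.i.d.\ draws from the mixture $\mu_{\bm{x},\delta}$ so that Fournier--Guillin can be applied. This step relies essentially on Assumption~\ref{assumptions_w2}(3) and on interpreting $\mu_{\bm{x},\delta}$ as the conditional mixture $\int_{|\tilde{\bm{x}}-\bm{x}|_x\le\delta}\mu_{\tilde{\bm{x}}}\,\nu(\text{d}\tilde{\bm{x}})/\nu(B_\delta(\bm{x}))$. One must also handle the event $\{N(\bm{x},\delta)=0\}$ separately, but there the trivial bound $W_2^2\le 4M$ is absorbed into the constant $C$. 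The remaining steps reduce to the $a^2-b^2$ trick combined with the boundedness and Lipschitz hypotheses of Assumption~\ref{assumptions_w2}.
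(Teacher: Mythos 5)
Your proposal is correct and follows essentially the same route as the paper's proof: a Monte Carlo variance bound for replacing $\nu$ by $\nu^{\text{e}}$, the $a^2-b^2$ factorization with prefactor $4\sqrt M$, a same-$\omega$ coupling plus the Lipschitz hypothesis to bound the $\delta$-localization error by $L\delta$, and the Fournier--Guillin empirical rate for the sampling error; your three-way split through $J_1$ and $J_2$ is just the paper's single four-term triangle inequality applied in two stages, and it yields the identical constants. If anything, your remarks on conditioning on $N(\bm{x},\delta)$ and on the event $\{N(\bm{x},\delta)=0\}$ are more careful than the published argument.
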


The proof to Theorem~\ref{theorem_1} is provided in Supplement~\ref{proof_theorem1}. 
Specifically, there is a trade-off between the second and third terms in the error bound Eq.~\eqref{theorem2_result}: if we increase $\delta$, then $N(\bm{x}, \delta)$ tends to increase, which makes the second term smaller but the third term larger. Nonetheless, Theorem~\ref{theorem_1}
implies that $\tilde{W}_2^2(\bm{y}, \hat{\bm{y}})$ can be well approximated by $\tilde{W}_{2, \delta}^{2, \text{e}}(\bm{y}, \hat{\bm{y}})$ when the number of training data $N$ is sufficiently large such that even for a small $\delta$, $\E[N(\bm{x}, \delta)^{-\frac{1}{2n}}]$ can be maintained small. In this scenario, minimizing our local squared $W_2$ loss function is also necessary such that the distribution of $\bm{y}(\bm{x};\omega)$ can be well represented by the distribution of $\hat{\bm{y}}(\bm{x};\hat{\omega})$ for different $\bm{x}$.

\subsection{Structure of the neural-network model with weight uncertainty}
 The structure of the neural network model with weight uncertainty used in Section~\ref{numerical_result} is given below in Fig.~\ref{fig:nn_model}. When we use this neural-network model with weight uncertainty as the approximate model $\hat{f}$ in Eq.~\eqref{approximate},
 all weights with uncertainty $\{w_{i, j, k}\}$ constitutes the random variable $\hat{\omega}$ in Eq.~\eqref{approximate}. The mean and variance $a_{i, j, k}, \sigma_{i, j, k}^2$ of the weight $w_{i, j, k}$ as well as the bias $b_{i, k}$ for all $i, j, k$ are to be optimized through minimizing the local squared $W_2$ loss function Eq.~\eqref{localw2} (or other loss functions in Subsection~\ref{example1}).
    \begin{figure}[h!]
    \centering
\includegraphics[width=0.9\linewidth]{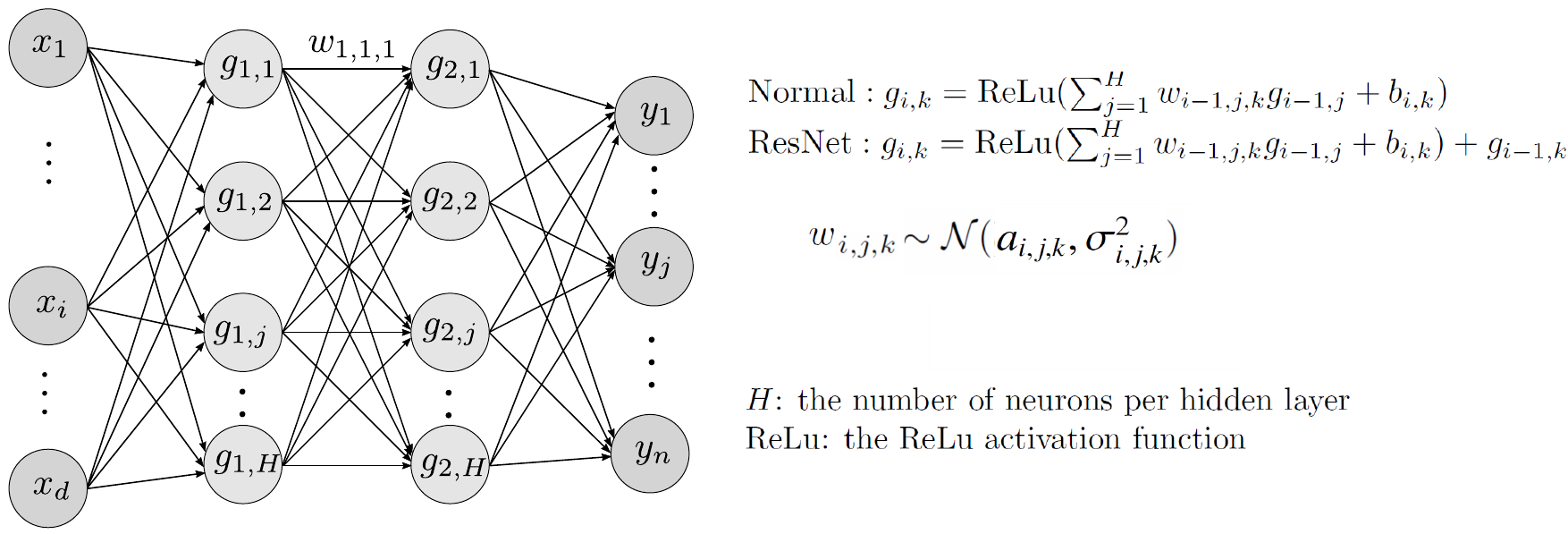}
    \caption{A sketch of the structure of the neural network model with weight uncertainty used in this paper. The weights $w_{i, j, k}\sim\mathcal{N}(a_{i, j, k}, \sigma_{i, j, k}^2)$ are independently sampled, \textit{i.e.}, $w_{i_1, j_1, k_1}$ is independent of $w_{i_2, j_2, k_2}$ when $(i_1, j_1, k_1)\neq (i_2, j_2, k_2)$. When using this neural network model to make predictions, for each input $\bm{x}=(x_1,\ldots,x_d)\in D\subseteq\mathbb{R}^d$, we shall resample all weights $\{w_{i, j, k}\}$ again.
    Either the normal feed-forward structure for forward propagation or the ResNet technique \cite{he2016deep} for forward propagation is adopted.}
    \label{fig:nn_model}
\end{figure}

\section*{Data Availability}
No new data were created in this research. The concrete compressive strength data used in Subsection~\ref{example3} are publicly available in \cite{misc_concrete_compressive_strength_165}.

\section*{Code Availability}
The code used in this research will be made publicly available upon acceptance of this manuscript.

\section*{Acknowledgement}
The authors thank Prof. Tom Chou from UCLA for his valuable suggestions on this work.

\bibliography{bibliography}
\appendix
\newpage
\renewcommand\thesection{S\arabic{section}}
\renewcommand\thesubsection{S\arabic{section}.\arabic{subsection}}
\renewcommand\thesubsubsection{S\arabic{section}.\arabic{subsection}.\arabic{subsubsection}}

\setcounter{equation}{0}
\renewcommand{\theequation}{S\arabic{equation}}

\section{Proof to Theorem~\ref{theorem_1}}
\label{proof_theorem1}
Here, we shall prove Theorem~\ref{theorem_1}. First, we have 
\begin{equation}
    \E\Big[\big|\tilde{W}_2^2(\bm{y}, \hat{\bm{y}}) - \tilde{W}_{2, \delta}^{2, \text{e}}(\bm{y}, \hat{\bm{y}})\big|\Big] \leq \E\Big[\big|\tilde{W}_2^2(\bm{y}, \hat{\bm{y}}) - \tilde{W}_{2}^{2, \text{e}}(\bm{y}, \hat{\bm{y}})\big|\Big] + \E\Big[\big|\tilde{W}_2^{2, \text{e}}(\bm{y}, \hat{\bm{y}}) - \tilde{W}_{2, \delta}^{2, \text{e}}(\bm{y}, \hat{\bm{y}})\big|\Big],
    \label{estimate_term}
\end{equation}
where 
\begin{equation}
    \tilde{W}_{2}^{2, \text{e}}(\bm{y}, \hat{\bm{y}})\coloneqq \int_{D}W_2^2\big(\mu_{\bm{x}}, \hat{\mu}_{\bm{x}}\big)\nu^{\text{e}}(\text{d}\bm{x}),
\end{equation}
 and $\nu^{\text{e}}(\text{d}x)$ is the empirical distribution of $\bm{x}$.
For the first term in Eq.~\eqref{estimate_term}, the following inequality holds:
\begin{equation}
\begin{aligned}
\E\Big[\big|\int_{D}W_2^2\big(\mu_{\bm{x}}, \hat{\mu}_{\bm{x}}\big)\nu^{\text{e}}(\text{d}\bm{x}) - \int_{D}W_2^2\big(\mu_{\bm{x}}, \hat{\mu}_{\bm{x}}\big)\nu(\text{d}\bm{x})\big|\Big]
    &\leq \E\Big[\big(\int_{D}W_2^2\big(\mu_{\bm{x}}, \hat{\mu}_{\bm{x}}\big)\nu^{\text{e}}(\text{d}\bm{x}) - \int_{D}W_2^2\big(\mu_{\bm{x}},\hat{\mu}_{\bm{x}}\big)\nu(\text{d}\bm{x})\big)^2\Big]^{\frac{1}{2}}\\
    &\quad\leq \frac{1}{\sqrt{N}}\E\bigg[\Big(W_2^2\big(\mu_{\bm{x}}, \hat{\mu}_{\bm{x}}\big) - \E[W_2^2\big(\mu_{\bm{x}}, \hat{\mu}_{\bm{x}}\big)]\Big)^2\bigg]^{\frac{1}{2}}
    \leq \frac{4M}{\sqrt{N}}.
    \label{intermediate1}
\end{aligned}
\end{equation}
The last inequality holds because for any $\bm{x}\in D$, using the assumption Eq.~\eqref{upperboundy}, we have
\begin{equation}
   0\leq W_2^2\big(\mu_{\bm{x}}, \hat{\mu}_{\bm{x}}\big) \leq 2 \Big(\E\big[\|y(\bm{x};\omega)\|^2\big] + \E\big[\|\hat{y}(\bm{x};\hat{\omega})\|^2\big]\Big)=4M.
\end{equation}

Next, we estimate the second term in Eq.~\eqref{estimate_term}:
\begin{equation}
    \E\bigg[\Big|\int_{D}W_2^2\big(\mu_{\bm{x}}, \hat{\mu}_{\bm{x}})\big)\nu^{\text{e}}(\text{d}x) - \int_{D}W_2^2\big(\mu_{\bm{x}, \delta}^{\text{e}}, \hat{\mu}_{\bm{x}, \delta}^{\text{e}}\big)\nu^{\text{e}}(\text{d}x)\Big|\bigg].
\end{equation}
We denote $\mu_{\bm{x}, \delta}$ ($\mu_{\bm{x}, \delta}^{\text{e}}$) to be the conditional distribution (empirical conditional distribution) of $\bm{y}(\tilde{\bm{x}}; \omega)$ conditioned on $|\tilde{\bm{x}}-\bm{x}|\leq \delta$ and $\hat{\mu}_{\bm{x}, \delta}$ ($\hat{\mu}_{\bm{x}, \delta}^{\text{e}}$ ) to be the conditional distribution (empirical conditional distribution) of $\hat{\bm{y}}(\tilde{\bm{x}}; \hat{\omega})$ conditioned on $|\tilde{\bm{x}}-\bm{x}|\leq \delta$, respectively.

For any $\bm{x}\in D$, we have
\begin{equation}
\begin{aligned}
&\hspace{-2cm}\Big|W_2^2\big(\mu_{\bm{x}}, \hat{\mu}_{\bm{x}}\big) - W_2^2\big(\mu_{\bm{x}, \delta}^{\text{e}}, \hat{\mu}_{\bm{x}, \delta}^{\text{e}}\big)\Big|\leq |W_2\big(\mu_{\bm{x}}, \hat{\mu}_{\bm{x}}\big) - W_2\big(\mu_{\bm{x}, \delta}^{\text{e}}, \hat{\mu}_{\bm{x}, \delta}^{\text{e}}\big)|\cdot\bigg(W_2\big(\mu_{\bm{x}}, \hat{\mu}_{\bm{x}}\big) + W_2\big(\mu_{\bm{x}, \delta}^{\text{e}}, \hat{\mu}_{\bm{x}, \delta}^{\text{e}}\big)\bigg)
\\&\hspace{2.2cm}\leq4\sqrt{M}\Big|W_2\big(\mu_{\bm{x}}, \hat{\mu}_{\bm{x}}\big) - W_2\big(\mu_{\bm{x}, \delta}^{\text{e}}, \hat{\mu}_{\bm{x}, \delta}^{\text{e}}\big)\Big|.
\end{aligned}
    \label{break_ineq}
\end{equation}

Using the triangle inequality of the Wasserstein distance \cite{clement2008elementary}, for any $\bm{x}$, we have
\begin{equation}
    \begin{aligned}
        &\hspace{-1cm}\big|W_2(\mu_{\bm{x}}, \hat{\mu}_{\bm{x}}) - W_2(\mu_{\bm{x}, \delta}^{\text{e}}, \hat{\mu}_{\bm{x},  \delta}^{\text{e}}) \big|\leq \big|W_2(\hat{\mu}_{\bm{x}}, \mu_{\bm{x}}) - W_2(\hat{\mu}_{\bm{x}}, \mu_{\bm{x}, \delta})\big| + \big|W_2(\hat{\mu}_{\bm{x}}, \mu_{\bm{x}, \delta}) - W_2(\mu_{\bm{x}, \delta}, \hat{\mu}_{\bm{x}, \delta})\big| \\
    &\hspace{4cm}+ \big|W_2(\mu_{\bm{x}, \delta}, \hat{\mu}_{\bm{x}, \delta}) - W_2(\hat{\mu}_{\bm{x}, \delta}, \mu_{\bm{x}, \delta}^{\text{e}})\big|  + \big|W_2(\hat{\mu}_{\bm{x}, \delta}, \mu_{\bm{x}, \delta}^{\text{e}}) - W_2(\mu_{\bm{x}, \delta}^{\text{e}}, \hat{\mu}^{\text{e}}_{\bm{x}, \delta})\big|\\
    &\hspace{3cm}\leq W_2(\mu_{\bm{x}, \delta}, {\mu}_{\bm{x}}) +  W_2(\hat{\mu}_{\bm{x}, \delta}, \hat{\mu}_{\bm{x}}) + W_2(\mu_{\bm{x},\delta}^{\text{e}}, \mu_{\bm{x}, \delta}) + W_2(\hat{\mu}^{\text{e}}_{\bm{x}, \delta}, \hat{\mu}_{\bm{x}, \delta})
    \label{misalign}
    \end{aligned}
\end{equation}

We shall estimate the first term in the last inequality of Eq.~\eqref{misalign}. We define a new random variable coupled with $\bm{y}(\bm{x}, \omega)$ such that given $\bm{x}, \omega$ of $\bm{y}(\bm{x}, \omega)$:
\begin{equation}
    \tilde{\bm{y}}(\tilde{\bm{x}}; \tilde{\omega}) \coloneqq \bm{f}(\tilde{\bm{x}}, \omega),
\end{equation}
where the random variable $\tilde{\bm{x}}$ is independent of $\bm{x}$ and independent of $\omega$, and we let $\tilde{\bm{x}}$ have a probability density $\frac{\mathbb{I}_{|\tilde{\bm{x}}-\bm{x}|\leq \delta}}{P(A_{\bm{x}})}\cdot \nu(\text{d}\tilde{\bm{x}})$. $A_x$ denotes the set $\{\tilde{\bm{x}}\in D:|\tilde{\bm{x}}-\bm{x}|_x\leq \delta\}$ and 
$\mathbb{I}_{|\tilde{\bm{x}}-\bm{x}|_x\leq \delta}$ is the indicator function:
\begin{equation}
    \mathbb{I}_{|\tilde{\bm{x}}-\bm{x}|_x\leq \delta}=1, |\tilde{\bm{x}}-\bm{x}|_{x}\leq \delta,\,\,\,\,\mathbb{I}_{|\tilde{\bm{x}}-\bm{x}|_x\leq \delta}=0, |\tilde{\bm{x}}-\bm{x}|_{x}>\delta. 
\end{equation}
Since $\bm{f}$ is Lipschitz in $\bm{x}$, we have
\begin{equation}
    \big\|\tilde{\bm{y}}(\tilde{\bm{x}}; \tilde{\omega}) - \bm{y}(\bm{x}; \omega)\big\|\leq L\delta.
\end{equation}
Additionally, the distribution of $\tilde{\bm{y}}$ is also $\mu_{\bm{x}, \delta}$ because $\tilde{x}$ and $\omega$ are independent.

 We take a special coupling probability measure $\tilde{\pi}(\mu_{\bm{x}}, \mu_{\bm{x}, \delta}) = (\bm{y}, \bm{\tilde{y}})_*P$ such that for all $A\in \mathcal{B}(\mathbb{R}^n\times\mathbb{R}^n)$, 
\begin{equation}
    \tilde{\pi}(\mu_{\bm{x}}, \mu_{\bm{x}, \delta})(A)=P\big((\bm{y}, \tilde{\bm{y}})^{-1}(A)\big),
\end{equation}
where $(\bm{y}, \tilde{\bm{y}})$ is interpreted as a measurable map from $\Omega\times(\mathbb{R}^d\times\Omega)$ to $\mathbb{R}^n\times\mathbb{R}^n$. 
$(\bm{y}, \bm{\tilde{y}})^{-1}(A)\in \Omega\times(\mathbb{R}^d\times\Omega)$ is the preimage of $A$ under $(\bm{y}, \bm{\tilde{y}})$ and $P\big((\bm{y}, \tilde{\bm{y}})^{-1}(A)\big)$ is the probability measure of the set $(\bm{y}, \tilde{\bm{y}})^{-1}(A)$. Therefore, we have
\begin{equation}
    W_2(\mu_{\bm{x}, \delta}, \mu_{\bm{x}})\leq \E_{(\bm{y}, \tilde{\bm{y}})\sim \tilde{\pi}(\mu_{\bm{x}}, \mu_{\bm{x}, \delta})}\Big[\big\|\bm{y}-\tilde{\bm{y}}\big\|^2\Big]^{\frac{1}{2}}\leq L\delta.
    \label{mis_error_bound}
\end{equation}
Similarly, for the second term in the last inequality of Eq.~\eqref{misalign}, from the Lipschtiz continuity condition of $\hat{\bm{f}}$ in Assumption Eq.~\eqref{l_condition}, we can also show that 
\begin{eqnarray}
    W_2(\hat{\mu}_{\bm{x}, \delta}, \hat{\mu}_{\bm{x}})\leq L\delta.
    \label{mis_error_bound2}
\end{eqnarray}

For the third and fourth terms in the last inequality of Eq.~\eqref{misalign}, from Theorem 1 in \cite{fournier2015rate}, there exists a constant $C$ such that 
\begin{eqnarray}
    \E\Big[W_2(\mu_{\bm{x},\delta}^{\text{e}}, \mu_{\bm{x},\delta})\Big]\leq \E\Big[W_2^2(\mu_{\bm{x},\delta}^{\text{e}}, \mu_{\bm{x},\delta})\Big]^{\frac{1}{2}}\leq C\E\Big[\|\bm{y}(\bm{x};\omega)\|_6^6\Big]^{\frac{1}{6}}h(N(x, \delta),n)\leq C\sqrt{M}h\big(N(x, \delta),n\big)
    \label{sample_bound}
\end{eqnarray}
and 
\begin{eqnarray}
\E\Big[W_2(\hat{\mu}_{\bm{x},\delta}^{\text{e}}, \hat{\mu}_{\bm{x},\delta})\Big]\leq \E\Big[W_2^2(\hat{\mu}_{\bm{x},\delta}^{\text{e}}, \hat{\mu}_{\bm{x},\delta})\Big]^{\frac{1}{2}}\leq C\E\Big[\|\hat{\bm{y}}(\bm{x};\hat{\omega})\|_6^6\Big]^{\frac{1}{6}}h(N(x, \delta), n)\leq C\sqrt{M}h\big(N(x, \delta),n\big),
    \label{sample_bound2}
\end{eqnarray}
respectively. Here, $\|\cdot\|_6$ is the $l^6$ norm of a vector and we have $\|\bm{y}\|_6\leq\|\bm{y}\|$. In Eq.~\eqref{sample_bound2}, the function $h$ is defined as:
\begin{equation}
h(N, n)=\left\{
\begin{aligned}
&N^{-\frac{1}{4}}\log(1+N)^{\frac{1}{2}}, n\leq4,\\
&N^{-\frac{1}{n}}, n> 4
\end{aligned}
\right.
\end{equation}

Plugging Eqs.~\eqref{sample_bound}, \eqref{sample_bound2}, \eqref{mis_error_bound}, and \eqref{mis_error_bound2} into Eq.~\eqref{misalign}, we have proved that:
\begin{equation}
    \E\Big[\big|W_2(\mu_{\bm{x}}, \hat{\mu}_{\bm{x}}) - W_2(\mu_{\bm{x}, \delta}^{\text{e}}, \hat{\mu}_{\bm{x}, \delta}^{\text{e}}) \big|\Big]\leq 2C\sqrt{M}h(N(x, \delta), n) + 2L\delta.
    \label{error_inter}
\end{equation}
Therefore, combining the two inequalities Eqs.~\eqref{break_ineq},\eqref{error_inter} and taking the expectation of Eq.~\eqref{error_inter} w.r.t. the empirical distribution $\nu^{\text{e}}(\bm{x})$, we have
\begin{eqnarray}
    \E\Big[\big|\tilde{W}_{2}^{2, \text{e}}(\bm{y}, \hat{\bm{y}}) - \tilde{W}_{2, \delta}^{2, \text{e}}(\bm{y}, \hat{\bm{y}})\big|\Big]\leq 8CM\E\big[h(N(x, \delta), n)\big] + 8\sqrt{M}L\delta.
    \label{intermediate2}
\end{eqnarray}
Combining the two inequalities Eqs.~\eqref{intermediate1} and \eqref{intermediate2}, the inequality~\eqref{theorem2_result} holds, thus completing the proof of Theorem~\ref{theorem_1}.

\section{Definitions of different loss metrics}
\label{def_loss}
Here, we provide descriptions and definitions for different loss functions used in this study. In the following, $N$ denotes the number of samples. 
\begin{compactenum}
\item The squared $W_2$ distance
$$W_2^2(\mu, \hat{\mu})\approx W_2^2(\mu^{\text{e}}, \hat{\mu}^{\text{e}}),$$ where
$\mu^{\text{e}}$ and $\hat{\mu}^{\text{e}}$ are the empirical
distributions of $\bm{y}$ and
$\hat{\bm{y}}$, respectively. It is
estimated by
\begin{equation}
W_2^2(\mu_N^{\text{e}},
\hat{\mu}_N^{\text{e}})\approx\texttt{ot.emd2}\Big(\frac{1}{N}\bm{I}_{N},
\frac{1}{N}\bm{I}_{N}, \bm{C}\Big),
\label{time_coupling}
\end{equation}
where $\texttt{ot.emd2}$ is the function for solving the earth movers
distance problem in the $\texttt{ot}$ package of Python \cite{flamary2021pot}. $N$ is the
number of ground truth and predicted samples, $\bm{I}_{N}$ is
an $N$-dimensional vector whose elements are all 1, and
$\bm{C}\in\mathbb{R}^{N\times N}$ is a matrix with entries
$(\bm{C})_{ij} = \|\bm{y}_i-\hat{\bm{y}}_j\|^2$. $\bm{y}_i$ and $\hat{\bm{y}}_j$ are the ground truth data and prediction associated with $\bm{x}_j$, respectively.
\item MMD (maximum mean discrepancy) 
  \cite{li2015generative}:
$$
\text{MMD}(\bm{y}, \hat{\bm{y}}) = \E[K(\bm{y}, \bm{y})]
- 2\E[K(\bm{y}, \hat{\bm{y}})] + \E[K(\hat{\bm{y}}, \hat{\bm{y}})],
$$
where $K$ is the standard radial basis function (or Gaussian kernel)
with the multiplier $2$ and number of kernels $5$. $\bm{y}$ and $\hat{y}$
are the ground truth observation and prediction, respectively.
\item Mean squared error (MSE): where $N$ is
  the total number of data:
$$\operatorname{MSE}(\bm{y}, \hat{\bm{y}}) = \sum_{i=1}^N \frac{1}{N}
\|y_{i}-\hat y_{i}\|^2$$.
\item Mean$^2$+var loss function:
\begin{eqnarray*}(\operatorname{mean}^2+\operatorname{var})(\bm{y}, \hat { \bm{y}}) =
  \sum_{i=1}^N \sum_{i=1}^N \frac{1}{N}
\|y_{i}-\hat y_{i}\|^2 + |\text{Var}(\bm{y}) - \text{Var}(\hat{\bm{y}})|
    \end{eqnarray*}
where
\begin{eqnarray}
    \text{Var}(\bm{y})  = \sum_{i=1}^N \big\|\bm{y}_i - \sum_{i=1}^N \frac{1}{N}\bm{y}_i\big\|^2,\,\,\,\,
    \text{Var}(\hat{\bm{y}})  = \sum_{i=1}^N \big\|\hat{\bm{y}}_i - \sum_{i=1}^N \frac{1}{N}\hat{\bm{y}}_i\big\|^2
\end{eqnarray}
\item The local squared $W_2$ distance
$$\tilde{W}_{2, \delta}^{2, \text{e}}(\bm{y}, \hat{y})$$ defined in Eq.~\eqref{localw2}. 
It is
estimated by
\begin{equation}
\tilde{W}_{2,\delta}^{2,{\text{e}}}(\bm{y},
\hat{\bm{y}})\approx\sum_{i=1}^N\frac{1}{N}\texttt{ot.emd2}\Big(\frac{1}{N(\bm{x}_i, \delta)}\bm{I}_{N(\bm{x}_i, \delta)},
\frac{1}{N(\bm{x}_i, \delta)}\bm{I}_{N(\bm{x}_i, \delta)}, \bm{C}(\bm{x}_i)\Big),
\label{time_coupling}
\end{equation}
where $\texttt{ot.emd2}$ is the function for solving the earth movers
distance problem. $N$ is the
number of ground truth and predicted samples, $\bm{I}_{N(\bm{x}_i, \delta)}$ is
an $N(\bm{x}_i, \delta)$-dimensional vector whose elements are all 1, and
$\bm{C}(\bm{x}_i)\in\mathbb{R}^{N(\bm{x}_i, \delta)\times N(\bm{x}_i, \delta)}$.
 For each $\bm{x}_i$, we denote $S(\bm{x}_i, \delta)\coloneqq\{\bm{x}_{i_1},\ldots,\bm{x}_{i_{N(\bm{x}_i, \delta)}}\}$ to be the set such that $ S(\bm{x}_i, \delta)=\{\tilde{x}\in S: |\tilde{\bm{x}}-\bm{x}_i|_x\leq\delta\}$. The entries in $\bm{C}$ are: $(\bm{C})_{sj} = \|\bm{y}_{i_s}-\hat{\bm{y}}_{i_j}\|^2$. $\|\cdot\|$ is the $l^2$ norm of a vector. $|\cdot|_x$ is the norm of the input $\bm{x}$ (specified in each example).
$\bm{y}_{i_s}$ and $\hat{\bm{y}}_{i_s}$ are the ground truth $\bm{y}_{i_s}=\bm{f}(\bm{x}_{i_s}, \omega_{i_s})$ and predicted $\hat{\bm{y}}_{i_s}=\hat{\bm{f}}(\bm{x}_{i_s}, \hat{\omega}_{i_s})$ for $\bm{x}_{i_s}\in S(\bm{x}_i, \delta), s=1,\ldots,N(\bm{x}_i, \delta)$.
\item Local MMD:
$$
\text{MMD}_{\delta}(\bm{y}, \hat{\bm{y}}) = \sum_{i=1}^N \frac{1}{N} \bigg(\E\Big[K\big(\bm{y}[\bm{x}_i, \delta], \bm{y}[\bm{x}_i, \delta]\big)\Big]
- 2\E\Big[K\big(\bm{y}[\bm{x}_i, \delta], \hat{\bm{y}}[\bm{x}_i, \delta]\big)\Big] + \E\Big[K\big(\hat{\bm{y}}[\bm{x}_i, \delta], \hat{\bm{y}}[\bm{x}_i, \delta]\big)\Big]\bigg),
$$
where $K$ is the standard radial basis function (or Gaussian kernel)
with multiplier $2$ and number of kernels $5$. $\bm{y}[\bm{x}_i, \delta]$ is the set of ground truth $\big\{\bm{y}_{i_s}=\bm{f}(\bm{x}_{i_s};\omega_{i_s})\big\}$ such that $\bm{x}_{i_s}\in S(\bm{x}_i, \delta)$. $\hat{\bm{y}}[\bm{x}_{i_s}, \delta]$ is the set of reconstructed $\big\{\hat{\bm{y}}_{i_s}=\hat{\bm{f}}(\bm{x}_{i_s};\hat{\omega}_{i_s})\big\}$ such that $\bm{x}_{i_s}\in S(\bm{x}_i, \delta)$. $S(\bm{x}_i, \delta)\coloneqq\{\bm{x}_{i_1},\ldots,\bm{x}_{i_{N(\bm{x}_i, \delta)}}\}$ has the same meaning as defined in the local squared $W_2$ distance.  
\item Local MSE:
$$
\text{MSE}_{\delta}(\bm{y}, \hat{\bm{y}}) = \sum_{i=1}^N \frac{1}{N} \text{MSE}\Big(\bm{y}[\bm{x}_i, \delta], \bm{y}[\bm{x}_i, \delta)]\Big)
$$
 $\bm{y}[\bm{x}_i, \delta]$ and  $\hat{\bm{y}}[\bm{x}_i, \delta]$ have the same meaning as defined in the local MMD loss function.
 \item Local Mean$^2$+var:
$$
(\text{Mean}^2+\text{var})_{\delta}(\bm{y}, \hat{\bm{y}}) = \sum_{i=1}^N \frac{1}{N} (\text{Mean}^2+\text{var})\Big(\bm{y}[\bm{x}_i, \delta], \bm{y}[\bm{x}_i, \delta)]\Big)
$$
 $\bm{y}[\bm{x}_i, \delta]$ and  $\hat{\bm{y}}[\bm{x}_i, \delta]$ have the same meaning as defined in the local MMD loss function.
\end{compactenum}

\section{Optimization \& training settings and hyperparameters}
\label{training_details}
In Table~\ref{tab:setting}, we list the training hyperparameters and settings for each example.  All experiments are conducted using Python
3.11 on a desktop equipped with a 32-core Intel® i9-13900KF CPU (when comparing
runtime, we train each model on just one core).
\begin{table}[h!]
\centering
\caption{Training hyperparameters, hyperparameters in the neural network model, and training settings for each example. The neural network parameters include means and standard deviations $a_{i, j, k}, \sigma_{i, j, k}$ for weights $w_{i, j, k}$ as well as biases $b_{i, k}$ in Fig.~\ref{fig:nn_model}.} 
\begin{tabular}{lllll}
\toprule
 & Subsection~\ref{example2} & Subsection~\ref{example1}& Subsection~\ref{example3}& Subsection~\ref{example4} \\
\midrule
gradient descent method & AdamW & AdamW & AdamW  & AdamW\\
forward propagation method & $\backslash$ & ResNet & ResNet & Normal\\
learning rate & 0.02 & 0.025 & 0.02  & 0.005\\
weight decay & 0.005 & 0.005 & 0.005 & 0.005  \\
number of epochs &1000 & 1000 & 1000 & 500  \\
number of training samples & 1000 & 2000 & 686 & 100 \\
size of neighborhood $\delta$ in the loss function Eq.~\eqref{localw2} & 0.1& 0.025& 0.05&0.1\\
number of hidden layers in $\Theta$ & $\backslash$ & 4 & 4 &2 \\
activation function &  $\backslash$&ReLu &ReLu &ReLu \\
number of neurons in each layer in $\Theta_1$ & $\backslash$ & 50 & 50 &100  \\
initialization for model/neural-network parameters & $1$  & $\mathcal{N}(0, 10^{-4})$  & $\mathcal{N}(0, 10^{-4})$ &$\mathcal{N}(0, 10^{-4})$\\
repeat times & 5 & 5 & 5 & 5\\
\bottomrule
\end{tabular}
\label{tab:setting}
\end{table}

\section{How distributions of model parameters and the input affects the accuracy of reconstructing Eq.~\eqref{example1_model}}
\label{model_parameter}
We carry out two additional experiments on reconstructing Eq.~\eqref{example1_model} in Subsection~\ref{example1}, aiming to investigate how the distributions of the model parameters $(\omega_1, \omega_2)$ and the input $x$ affect the accuracy of reconstructing the nonlinear model Eq.~\eqref{example1_model}. 

First, we varied the distribution of the uncertain parameters $(\omega_1, \omega_2)$. We set $x \sim \mathcal{U}(-0.5, 0.5)$ and $(\omega_1, \omega_2)^T \sim \mathcal{N}((19.1426, 0.5311)^T, \sigma^2 I_2)$ in Eq.~\eqref{example1_model}, where $I_2 \in \mathbb{R}^{2\times2}$ is the identity matrix, to generate the training samples. For the testing samples, we set $x\in\{x_i: x_i=-0.5+0.1i, i = 0, \ldots, 10\}$ and $(\omega_1, \omega_2)^T \sim \mathcal{N}\big((19.1426, 0.5311)^T, \sigma^2 I_2\big)$. At each $x_i$, 100 testing samples are generated.
The variables $x$ and $(\omega_1, \omega_2)$ were independently sampled for both the training and testing sets. We varied $\sigma$, the standard deviation of the latent model parameters $\omega_1$ and $\omega_2$, for both the training and testing samples. 

Second, we alter the distribution of $x$ in the training set. We let $(\omega_1, \omega_2)\sim\mathcal{N}\big((19.1426, 0.5311)^T, I_2\big)$ and sample
$x\sim\mathcal{U}(-a, a)$ with different $a$ for the training set. For testing, we generate a testing set $T=\cup_{i=0}^{11} T_i$ with each $T_i$ containing 100 samples $(x_{r, i}, y(x_{r, i}, \omega)), x_{r, i}=0.1i-0.5, r=1,\ldots,100$ and $(\omega_1, \omega_2)\sim\mathcal{N}\big((19.1426, 0.5311)^T, I_2\big)$. $x$ and $(\omega_1, \omega_2)$ are independently sampled for both training and testing sets.

For both experiments, we use the same neural network model with weight uncertainty, training settings, and hyperparameters as used in Subsection~\ref{example1} (listed in Table~\ref{tab:setting}). The errors in the predicted mean $\E[\hat{y}(x; \hat{\omega})]$ and standard deviation $\text{SD}[\hat{y}(x; \hat{\omega})]$ are calculated on the testing set.

\begin{figure}[h!]
    \centering
    \includegraphics[width=0.8\linewidth]{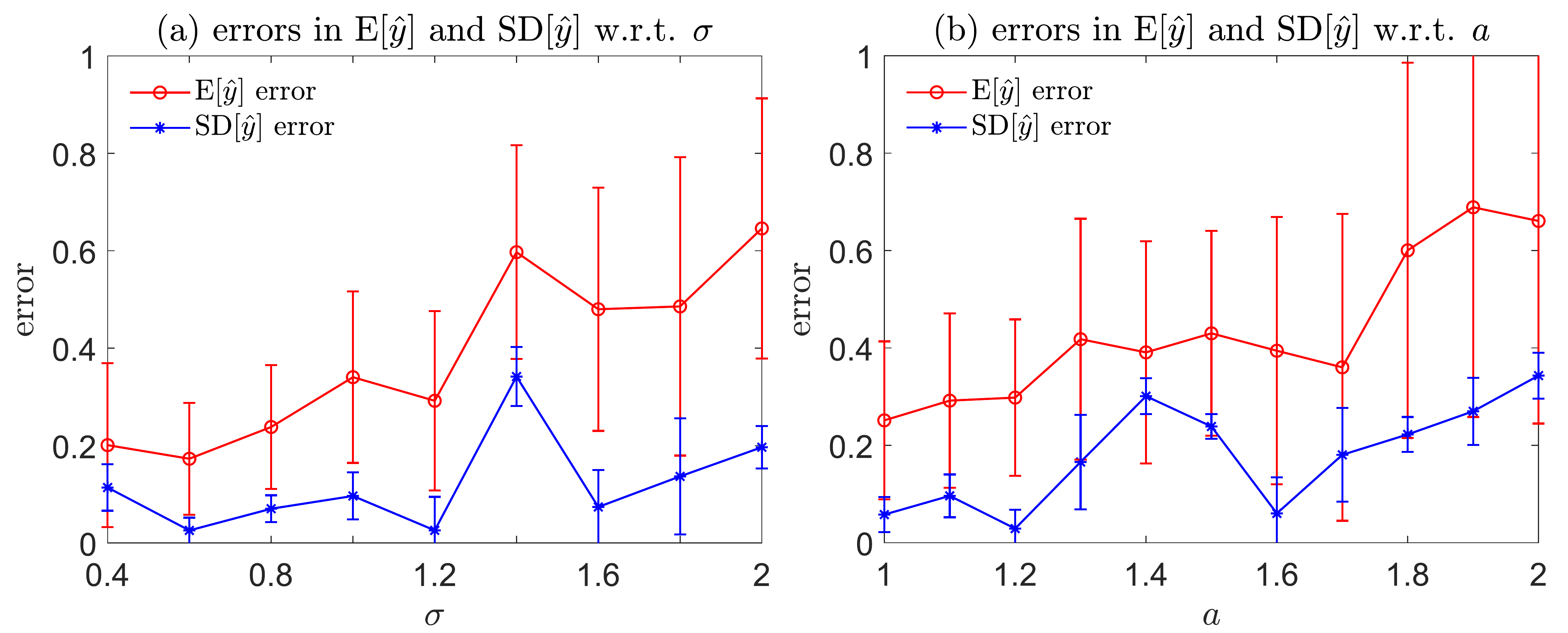}
    \caption{(a) The errors in the predicted mean $\E[\hat{y}(x; \hat{\omega})]$ and predicted standard deviation $\text{SD}[\hat{y}(x; \hat{\omega})]$ w.r.t. the standard deviations of the uncertain model parameters $(\omega_1, \omega_2)$. (b) The errors in the predicted mean $\E[\hat{y}(x; \hat{\omega})]$ and predicted standard deviation $\text{SD}[\hat{y}(x; \hat{\omega})]$ when varying the standard deviation of $x$ in the training set. The errors are evaluated on the testing set.}
    \label{fig:example1_appendixic}
\end{figure}

As depicted in Fig.~\ref{fig:example1_appendixic} (a)(b), larger standard deviations in the model parameters $(\omega_1, \omega_2)$ and a larger standard deviation in the input $x$ of the training set both result in larger errors in the predicted mean $\E[\hat{y}(x; \hat{\omega})]$ and in the predicted standard deviation $\text{SD}[\hat{y}(x; \hat{\omega})]$. A possible reason could be that larger standard deviations in the model parameters $(\omega_1, \omega_2)$ and a larger standard deviation in the input $x$ both lead to more sparsely distributed training samples, making it
more difficult to reconstruct the underlying nonlinear model Eq.~\eqref{example1_model}.

\section{Neural network architecture}
\label{nn_structure}
We carry out an additional experiment to explore how the structure of the neural network model in Fig.~\ref{fig:nn_model} affects the accuracy of reconstructing the nonlinear model Eq.~\eqref{example1_model}. The local squared $W_2$ distance Eq.~\eqref{localw2} is used as the loss function for training the neural network with the size of neighborhood $\delta=0.1$. We shall change the number of hidden layers as well as the number of neurons per hidden layer. 
Additionally, we will compare the performance of the ResNet technique against the standard feed-forward structure for forward propagation.

\begin{table}[h!]
\centering
  \caption{Means and standard deviations of errors in the predicted $\E[\hat{y}(x;\hat{\omega})]$ and $\text{SD}[\hat{y}(x;\hat{\omega})]$ in Subsection~\ref{example1}, calculated over 5 independent experiments. The errors are calculated on the same testing set as used in Subsection~\ref{example1}.}
\begin{tabular}{lrcllr}
\toprule  Width & Depth & Error in $\E[\hat{y}(x;\hat{\omega})]$ & Error in $\text{SD}[\hat{y}(x;\hat{\omega})]$ & runtime (s) \\ 
\midrule 
  12  & 4(ResNet) & \(0.053(\pm 0.013) \) & \( 0.127 (\pm 0.023) \) & \(3099\pm561\) \\ 
  25  & 4(ResNet) & \(0.035(\pm 0.005) \) & \( 0.126 (\pm 0.026) \) & \(2469\pm502\) \\ 
   50 & 4(ResNet) & \( 0.051\pm0.013 \) & \( 0.114\pm0.012 \) & $3346\pm213$\\ 
100  & 4(ResNet) & \( 0.232 (\pm 0.409) \) & \( 0.190 (\pm 0.138) \) & $3706\pm781$ \\ 
50 & 1(ResNet) & \( 0.044\pm0.005 \) & \( 0.106\pm0.018 \) & $2853\pm593$\\ 
 50 & 2(ResNet) & \( 0.034\pm0.003 \) & \( 0.106\pm0.034 \)
& $2801\pm299$ \\ 
50 & 3(ResNet) & \( 0.043\pm0.010 \) & \( 0.115\pm0.024 \) & $2428\pm499$ \\ 
 50 & 1(feed-forward)  &\(0.049\pm0.005\) & \(0.106\pm0.023\)& \(2503\pm 90\)\\
 50 & 2(feed-forward)  &\(0.044\pm0.011\) & \(0.129\pm0.019\)& \(2526\pm 560\)\\
  50 & 3(feed-forward)  &\(0.075\pm0.031\) & \(0.161\pm0.020\)& \(2526\pm 560\)\\
   50 & 4(feed-forward) &\(0.060\pm0.028\)&\(0.175\pm0.022\) & \(2820\pm622\)\\
\bottomrule
\end{tabular}
\label{tab:example_nnstructure}
\end{table}

In Table~\ref{tab:example_nnstructure}, the errors of the predicted mean and standard deviation, $\E[\hat{y}(x;\hat{\omega})]$ and $\text{SD}[\hat{y}](x;\hat{\omega})$, may increase as the number of hidden layers in the neural network increases if the ResNet technique is not implemented. However, when the ResNet technique is employed, the errors of the predicted mean and standard deviation do not deteriorate as the number of hidden layers increases.
 This improvement may be attributed to the ResNet technique mitigating the gradient vanishing issue \cite{borawar2023resnet} that affects simple feed-forward neural networks.
 On the other hand, if the number of neurons is too small or too large, then the errors in $\E[\hat{y}(x;\hat{\omega})]$ and $\text{SD}[\hat{y}(x;\hat{\omega})]$ becomes large. A too-small number of neurons per layer could be insufficient, while an excessively large number of neurons can complicate the optimization of weights under uncertainty. A neural network with 3 hidden layers and 50 neurons in each layer, equipped with the ResNet technique, appears to be the most effective configuration for reconstructing the nonlinear model Eq.~\eqref{example1_model}.

\section{Analysis on the RHSs of the two ODEs Eqs.~\eqref{ode_uncertain} and \eqref{approximate_ode_model}}
\label{ode_proof}
Assume that $\bm{g}$ and $\hat{\bm{g}}$ on the RHSs of Eqs.~\eqref{ode_uncertain} and \eqref{approximate_ode_model} are uniformly Lipschtiz continuous in the first two arguments:
\begin{equation}
\begin{aligned}
    &\big\|\bm{g}(\bm{y}_1,t_1,\omega_t) - \bm{g}(\bm{y}_2,t_2,\omega)\big\|\leq L_g\big(\|\bm{y}_1 - \bm{y}_2\| + |t_2-t_1|\big),\\
    &\hspace{1cm}\big\|\hat{\bm{g}}(\bm{y}_1,t_1,\omega_t)-\hat{\bm{g}}(\bm{y}_2,t_2,\omega)\big\|\leq L_g\big(\|\bm{y}_1 - \bm{y}_2\| + |t_2-t_1|\big),\,\, 0<L_g\leq\infty, \forall \bm{y}_1, \bm{y}_2\in\mathbb{R}^n, \forall t_1, t_2\geq 0.
\end{aligned}
    \label{g_l_condition}
\end{equation}
We can prove the following result.

\begin{proposition}
\rm
\label{Propode}
Assuming that Eq.~\eqref{g_l_condition} holds, there exist two constants $C_0, C_1$ such that:
\begin{equation}
    W_2(\mu_{\bm{y}_0, t}, \hat{\mu}_{\bm{y}_0, t})\leq C_0\sup_{\bm{y}, 0\leq s\leq t}W_2^2(\eta_{\bm{y}, s}, \hat{\eta}_{\bm{y}, s})+C_1,\,\, \forall \bm{y}_0\in\mathbb{R}^n,
    \label{upper_bound_result}
\end{equation}
 In Eq.~\eqref{upper_bound_result}, $\mu_{\bm{y}_0, t}$ and $\hat{\mu}_{\bm{y}_0, t}$ are the distributions of $\bm{y}(\bm{y}_0,t;\omega)$ and $\hat{\bm{y}}(\bm{y}_0,t;\hat{\omega})$; $\eta_{\bm{y}, s},\hat{\eta}_{\bm{y}, s}$ are the distributions of $\bm{g}(\bm{y}, s, \omega)$ and $\hat{\bm{g}}(\bm{y}, s, \omega)$ in Eqs.~\eqref{ode_uncertain} and \eqref{approximate_ode_model}.
\end{proposition}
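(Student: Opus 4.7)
The plan is to couple the random trajectories through a single joint law on $(\omega,\hat{\omega})$, bound the expected squared pathwise distance by a Gr\"onwall inequality whose source term is controlled by $\sup_{\bm{y},s} W_2^2(\eta_{\bm{y},s},\hat{\eta}_{\bm{y},s})$, and convert that pathwise bound into the desired $W_2$ estimate. First I would use the integral forms of Eqs.~(\ref{ode_uncertain}) and~(\ref{approximate_ode_model}) to write the solution difference and insert the intermediate integrand $\bm{g}(\hat{\bm{y}}(\bm{y}_0,s;\hat{\omega}),s,\omega)$. This splits the integrand into a ``state'' piece bounded by $L_g\|\bm{y}(s)-\hat{\bm{y}}(s)\|$ via the Lipschitz assumption~(\ref{g_l_condition}), and a ``drift-law'' piece $\bm{g}(\hat{\bm{y}}(s),s,\omega) - \hat{\bm{g}}(\hat{\bm{y}}(s),s,\hat{\omega})$ whose expected squared norm will be controlled by the Wasserstein distances between $\eta_{\bm{y},s}$ and $\hat{\eta}_{\bm{y},s}$.

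Second, I would fix a joint distribution $\pi$ on $(\omega,\hat{\omega})$ designed so that, for every $(\bm{y},s)$ visited by the approximate trajectory, the induced push-forward joint law of $(\bm{g}(\bm{y},s,\omega),\hat{\bm{g}}(\bm{y},s,\hat{\omega}))$ is close to the state-dependent optimal coupling realizing $W_2^2(\eta_{\bm{y},s},\hat{\eta}_{\bm{y},s})$. Because $(\omega,\hat{\omega})$ is a single latent pair shared along the whole trajectory, one $\pi$ cannot simultaneously realize these optimal couplings at every $(\bm{y},s)$; the resulting pointwise mismatch is what produces the additive constant $C_1$ in Eq.~(\ref{upper_bound_result}). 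Concretely I would take $\pi$ to be (an approximation to) the optimal coupling at a representative anchor $(\bm{y}_*,s_*)$, and use the Lipschitz regularity~(\ref{g_l_condition}) together with the boundedness of the reachable state set to show
\begin{equation*}
\mathbb{E}_\pi\!\big[\|\bm{g}(\hat{\bm{y}}(s),s,\omega)-\hat{\bm{g}}(\hat{\bm{y}}(s),s,\hat{\omega})\|^2\big] \le \sup_{\bm{y},\,0\le s'\le t}W_2^2(\eta_{\bm{y},s'},\hat{\eta}_{\bm{y},s'}) + \epsilon(s),
\end{equation*}
with $\int_0^t\epsilon(s)\,ds$ bounded by a constant depending only on $L_g$ and $t$.

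Third, squaring the triangle inequality from the first step, applying Cauchy--Schwarz in the time integral, and taking $\mathbb{E}_\pi$ produces the scalar inequality $\varphi(t)\le 2tL_g^2\int_0^t\varphi(s)\,ds + 2t^2 S + 2t\int_0^t\epsilon(s)\,ds$, where $\varphi(t):=\mathbb{E}_\pi[\|\bm{y}(\bm{y}_0,t;\omega)-\hat{\bm{y}}(\bm{y}_0,t;\hat{\omega})\|^2]$ and $S:=\sup_{\bm{y},\,0\le s\le t}W_2^2(\eta_{\bm{y},s},\hat{\eta}_{\bm{y},s})$. Gr\"onwall's inequality then yields $\varphi(t)\le e^{2L_g^2 t^2}\big(2t^2 S+2t\int_0^t\epsilon(s)\,ds\big)$. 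Since $\pi$ induces a coupling of $\mu_{\bm{y}_0,t}$ and $\hat{\mu}_{\bm{y}_0,t}$, one has $W_2^2(\mu_{\bm{y}_0,t},\hat{\mu}_{\bm{y}_0,t})\le\varphi(t)$, and the elementary estimate $\sqrt{a+b}\le\sqrt{a}+\sqrt{b}$ combined with the a-priori boundedness of $S$ (allowing $\sqrt{S}$ to be dominated linearly in $S$ within the relevant range) delivers the claimed bound $W_2(\mu_{\bm{y}_0,t},\hat{\mu}_{\bm{y}_0,t})\le C_0 S + C_1$.

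The main obstacle is the second step: constructing a single coupling $\pi$ of the constant latent variables $(\omega,\hat{\omega})$ that simultaneously approximates every state-dependent optimal coupling encountered along the trajectory. The Lipschitz continuity of $\bm{g}$ and $\hat{\bm{g}}$ in $(\bm{y},t)$ makes the map $(\bm{y},s)\mapsto\pi^*_{\bm{y},s}$ sufficiently regular, and it is precisely this continuity---together with a measurable-selection argument and the boundedness of the reachable state region---that allows the pointwise mismatch to be bounded uniformly and absorbed into the single additive constant $C_1$.
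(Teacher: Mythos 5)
Your proposal follows essentially the same route as the paper's proof: the paper likewise builds a single coupling of $(\omega,\hat{\omega})$ as the pullback of the optimal coupling of the drift laws at one anchor point (it uses $(\bm{y},s)=(0,0)$), inserts an intermediate drift term so the Lipschitz condition~\eqref{g_l_condition} plus Gr\"onwall controls the pathwise squared difference, bounds the anchor term by $\sup_{\bm{y},s}W_2^2(\eta_{\bm{y},s},\hat{\eta}_{\bm{y},s})$, and absorbs the anchor mismatch $L_g^2\big(\|\bm{y}(\bm{y}_0,s;\omega)\|^2+s^2\big)$ into the additive constant $C_1$, finally passing to $W_2(\mu_{\bm{y}_0,t},\hat{\mu}_{\bm{y}_0,t})$ via the induced pushforward coupling of the endpoint laws. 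The only cosmetic differences are that you insert $\bm{g}$ evaluated along the approximate trajectory rather than $\hat{\bm{g}}$ along the true one, and that the paper's $C_1$ explicitly carries $\int_0^t\big(\E[\|\bm{y}(\bm{y}_0,s;\omega)\|^2]+s^2\big)\text{d}s$ instead of appealing to a bounded reachable set.
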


\begin{proof}
 First, note that
\begin{equation}
    \begin{aligned}
       \text{d}\|\bm{y}(\bm{y}_0, t;\omega)-\hat{\bm{y}}(\bm{y}_0, t;\hat{\omega})\|^2&=2\Big(\bm{y}(\bm{y}_0,t;\omega)-\hat{\bm{y}}(\bm{y}_0,t;\hat{\omega}), \bm{g}(\bm{y}(\bm{y}_0, t;\omega), t, \omega) - \hat{\bm{g}}(\hat{\bm{y}}(\bm{y}_0, t;\hat{\omega}), t, \hat{\omega}) \Big)\\
        &\leq  \big\|\bm{y}(\bm{y}_0,t;\omega)-\hat{\bm{y}}(\bm{y}_0,t;\hat{\omega})\|^2 + \big\|\bm{g}(\bm{y}(\bm{y}_0,t;\omega), t, \omega) \\
        &\hspace{2cm} -\hat{\bm{g}}(\bm{y}(\bm{y}_0,t;\hat{\omega}), t, \hat{\omega}) + \hat{\bm{g}}(\bm{y}(\bm{y}_0,t;\hat{\omega}), t, \hat{\omega})     
        - \hat{\bm{g}}(\hat{\bm{y}}(\bm{y}_0,t;\hat{\omega}), t, \hat{\omega})\big\|^2\\
        &\leq (1 + 2L_g^2)\|\bm{y}(\bm{y}_0,t;\omega)-\hat{\bm{y}}(\bm{y}_0,t;\hat{\omega})\|^2 + 2\|\bm{g}(\bm{y}(\bm{y}_0,t;\omega), t, \omega) - \hat{\bm{g}}(\bm{y}(\bm{y}_0,t;\omega), t, \hat{\omega})\|^2\\
        &\leq (1 + 2L_g^2)\|\bm{y}(\bm{y}_0, t;\omega)-\hat{\bm{y}}(\bm{y}_0,t;\hat{\omega})\|^2 + 2\big\|\bm{g}(\bm{y}(\bm{y}_0,t;\omega), t, \omega) - \bm{g}(0, 0, \omega) \\&\hspace{2cm}+ \bm{g}(0, 0, \omega) -  \hat{\bm{g}}(0, 0, \hat{\omega}) + \hat{\bm{g}}(0, 0, \hat{\omega}) - \hat{\bm{g}}(\bm{y}(\bm{y}_0, t;\omega), t, \hat{\omega})\big\|^2\\
        &\hspace{-2cm}\leq (1 + 2L_g^2)\|\bm{y}(\bm{y}_0, t;\omega)-\hat{\bm{y}}(\bm{y}_0, t;\hat{\omega})\|^2 + 6\|\bm{g}(0, 0, \omega) - \hat{\bm{g}}(0, 0, \hat{\omega})\|^2 + 12L_g^2\big(\|\bm{y}(\bm{y}_0, s;\omega)\|^2+t^2\big) ,
    \end{aligned}
    \label{diff_form}
\end{equation}
where $(\cdot, \cdot)$ denotes the inner product of two $n$-dimensional vectors. By applying the Gronwall's inequality to the quantity $\|\bm{y}(\bm{y}_0,t;\omega)-\hat{\bm{y}}(\bm{y}_0,t;\hat{\omega})\|^2$ in Eq.~\eqref{diff_form}, we can conclude that:
\begin{equation}
    \|\bm{y}(\bm{y}_0,t;\omega)-\hat{\bm{y}}(\bm{y}_0,t;\hat{\omega})\|^2\leq 6\exp(t + 2L_g^2t)\cdot\int_0^t \Big(\|\bm{g}(0, 0, \omega) - \hat{\bm{g}}(0, 0, \hat{\omega})\|^2 + 2L_g^2(\|\bm{y}(\bm{y}_0, s;\omega)\|^2+s^2)\Big)\text{d}s.
    \label{ineq_g}
\end{equation}
In Eq.~\eqref{ineq_g}, $\big(\bm{g}(0, 0, \omega), \hat{\bm{g}}(0, 0, \hat{\omega})\big)$ is seen as a random variables in $\mathbb{R}^{2n}$. For any coupling probability measure $\pi\big(\bm{g}(0, 0, \omega), \hat{\bm{g}}(0, 0, \hat{\omega})\big)$ such that its marginal distributions are $\eta_{0, 0}$ and $\hat{\eta}_{0, 0}$ ($\eta_{\bm{y}, s}$ and $\hat{\eta}_{\bm{y}, s}$ denote the probability measures of  $\bm{g}(\bm{y}, s, \omega)$ and $\hat{\bm{g}}(\bm{y}, s, \hat{\omega})$, respectively), we denote $\pi^*\big(\omega, \hat{\omega}\big)$ such that
\begin{equation}
    \pi^*\big(A, B) = \pi\Big(\big((\bm{g}(0, 0, \omega), \hat{\bm{g}}(0, 0, \hat{\omega})\big)(A, B)\Big),
\end{equation}
where $A\in\mathcal{B}(\Omega), B\in\mathcal{B}(\hat{\Omega})$. In other words, $pi$ is the pushforward measure of $\pi^*$. Here, $\big(\bm{g}(0, 0, \omega), \hat{\bm{g}}(0, 0, \hat{\omega})\big)$ is considered a measurable map from $\Omega\times \hat{\Omega}$ to $\mathbb{R}^n\times\mathbb{R}^n$.
 Specifically, if we take the expectation of Eq.~\eqref{ineq_g} and taking the infimum over all $\pi(\bm{g}(0, 0, \omega), \hat{\bm{g}}(0, 0, \hat{\omega}))$, we conclude that
\begin{equation}
\begin{aligned}
    \E_{(\omega, \hat{\omega})\sim\pi^*(\omega, \hat{\omega})}\big[\|\bm{y}(\bm{y}_0,t;\omega)-\hat{\bm{y}}(\bm{y}_0,t;\hat{\omega})\|^2\big]&\leq 6\exp(t + 2L_g^2t)\int_0^t W_2^2(\eta_{0, 0}, \hat{\eta}_{0, 0}) + 2L_g^2\Big(\E\big[\|\bm{y}(\bm{y}_0,s;\omega)\|^2\big] + s^2\Big)\text{d}s\\
    &\hspace{-3cm}\leq 12L_g^2\exp(t + 2L_gt)\int_0^t \Big(\E\big[\|\bm{y}(\bm{y}_0,s;\omega)\|^2\big] + s^2\Big)\text{d}s + 6t\exp(t + 2L_g^2t)\cdot\sup_{s, \bm{y}}W_2^2(\eta_{\bm{y}, s}, \hat{\eta}_{\bm{y}, s}).
\end{aligned}
    \label{ineq_g1}
\end{equation}
It is easy to verify that the marginal distributions of $\pi^*(\omega, \hat{\omega})$ are the distributions of $\omega$ and $\hat{\omega}$, respectively. We denote 
$\pi^{\sharp}(C, D),\,\, (C, D)\in\mathcal{B}(\mathbb{R}^n)\times\mathcal{B}(\mathbb{R}^n)$
to be the pushforward probability measure of $\pi^*$ such that 
\begin{equation}
\pi^{\sharp}(C, D) = \pi^*\Big(\big(\bm{y}(\bm{y}_0, t; \omega), \hat{\bm{y}}(\bm{y}_0, t; \omega)\big)^{-1}(C, D)\Big),
\end{equation}
where $\big(\bm{y}(\bm{y}_0, t, \omega), \hat{\bm{y}}(\bm{y}_0, t, \omega)\big)^{-1}(C, D)$ is the preimage of $(C, D)$ in $\mathcal{B}(\Omega)\times\mathcal{B}(\hat{\Omega})$.
Additionally, we can verify that the marginal distributions of $\pi^{\sharp}$ are $\mu_{\bm{y}_0, t}$ and $\hat{\mu}_{\bm{y}_0, t}$, respectively. Therefore, using the inequality~\eqref{ineq_g1}, the following inequality holds:
\begin{equation}
\begin{aligned}
    W_2^2(\mu_{\bm{y}_0, t}, \hat{\mu}_{\bm{y}_0, t})&\leq \E_{(\bm{y}, \hat{\bm{y}})\sim\pi^{\sharp}(\bm{y}, \hat{\bm{y}})}\Big[\big\|\bm{y}-\hat{\bm{y}}\big\|^2\Big] = \E_{(\omega, \hat{\omega})\sim\pi^{*}(\omega, \hat{\omega})}\Big[\big\|\bm{y}(\bm{y}_0, t;\omega)-\hat{\bm{y}}(\bm{y}_0, t;\hat{\omega})\big\|^2\Big]\\
    &\quad\leq 12L_g^2\exp(t + 2L_g^2t)\cdot\int_0^t \Big(\E\big[\|\bm{y}(\bm{y}_0,s;\omega)\|^2\big] + s^2\Big)\text{d}s + 6t\exp(1 + 2L_g^2t)\cdot\sup_{s, \bm{y}}W_2^2(\eta_{\bm{y}, s}, \hat{\eta}_{\bm{y}, s}),
\end{aligned}
\end{equation}
which completes the proof to Proposition~\ref{Propode}. The two constants $C_0,C_1$ in Proposition~\ref{Propode} are:
\begin{equation}
\begin{aligned}
   C_0\coloneqq  6t\exp(1 + 2L_g^2t),  \,\,\,\,  C_1\coloneqq 12L_g^2\exp(t + 2L_g^2t)\cdot\int_0^t \Big(\E\big[\|\bm{y}(\bm{y}_0,s;\omega)\|^2\big] + s^2\Big)\text{d}s 
\end{aligned}
\end{equation}
\end{proof}

Proposition~\ref{Propode} implies that minimizing 
\begin{equation}
\begin{aligned}
    \tilde{W}_{2,\delta}^{2, \text{e}}\big(\bm{y}(\bm{y}_0, t;\omega), \hat{\bm{y}}(\bm{y}_0, t;\hat{\omega})\big)&\approx \int_{\mathbb{R}^n} W_2^2(\mu_{\bm{y}_0, t}, \hat{\mu}_{\bm{y}_0, t})\text{d}\nu({\bm{y}}_0)\leq \max_{\bm{y}_0} W_2(\mu_{\bm{y}_0, t}, \hat{\mu}_{\bm{y}_0, t}) \cdot \nu(\mathbb{R}^n)\\
    &\quad \leq  \Big(C_0\sup_{\bm{y}, 0\leq s\leq t}W_2^2(\eta_{\bm{y}, s}, \hat{\eta}_{\bm{y}, s})+C_1\Big)\cdot \nu(\mathbb{R}^n),
\end{aligned}
\end{equation}
where $\nu({\bm{y}}_0)$ is the probability measure of the initial condition $\bm{y}_0$,
is necessary such that $W_2^2(\eta_{\bm{y}, s}, \hat{\eta}_{\bm{y}, s})$ is small for any $\bm{y}\in\mathbb{R}^n$ and $0\leq s\leq t$. In other words, 
if $\tilde{W}_{2,\delta}^{2, \text{e}}\big(\bm{y}(\bm{y}_0, t;\omega), \hat{\bm{y}}(\bm{y}_0, t;\hat{\omega})\big)$ is large, then there exists $\bm{y}, 0\leq s\leq t$ such that $W_2^2(\eta_{\bm{y}, s}, \hat{\eta}_{\bm{y}, s})$ is large and thus the distribution of $\bm{g}(\bm{y}, s, \omega)$ cannot be well approximated by the distribution of $\hat{\bm{g}}(\bm{y}, s, \hat{\omega})$.

\end{document}